\definecolor{linkdarkblue}{rgb}{0, 0.08, 0.45}    
\definecolor{darkgreen}{rgb}{0.0, 0.5, 0.0}
\definecolor{lightblue}{rgb}{0.867, 0.922, 0.969}
\numberwithin{equation}{section}
\newtheorem{theorem}{Theorem}[section]
\newtheorem{lemma}[theorem]{Lemma}
\newcommand{\ie}{\text{i.e., }}         
\newcommand{\eg}{\text{e.g., }}         
\newcommand{\etc}{\text{etc}}           
\newcommand{\wrt}{\text{w.r.t. }}       
\newcommand{\cf}{\text{cf. }}           
\newcommand{\aka}{\text{a.k.a. }}       
\title{PSL: Rethinking and Improving Softmax Loss from Pairwise Perspective for Recommendation}
\author{
    Weiqin Yang~\footnotemark[2]\hspace{0.5em}\footnotemark[3] \\
    Zhejiang University \\
    \texttt{tinysnow@zju.edu.cn} \\
    \And
    Jiawei Chen~\footnotemark[1]\hspace{0.5em}\footnotemark[2]\hspace{0.5em}\footnotemark[3]\hspace{0.5em}\footnotemark[4] \\
    Zhejiang University \\
    \texttt{sleepyhunt@zju.edu.cn} \\
    \And
    Xin Xin \\
    Shandong University \\
    \texttt{xinxin@sdu.edu.cn} \\
    \AND
    Sheng Zhou \\
    Zhejiang University \\
    \texttt{zhousheng\_zju@zju.edu.cn} \\
    \And
    Binbin Hu \\
    Ant Group \\
    \texttt{bin.hbb@antfin.com} \\
    \And
    Yan Feng~\footnotemark[2]\hspace{0.5em}\footnotemark[3] \\
    Zhejiang University \\
    \texttt{fengyan@zju.edu.cn} \\
    \AND
    Chun Chen~\footnotemark[2]\hspace{0.5em}\footnotemark[3] \\
    Zhejiang University \\
    \texttt{chenc@zju.edu.cn} \\
    \And
    Can Wang~\footnotemark[2]\hspace{0.5em}\footnotemark[4] \\
    Zhejiang University \\
    \texttt{wcan@zju.edu.cn} \\
}
\begin{document}

\maketitle

\renewcommand{\thefootnote}{\fnsymbol{footnote}}

\footnotetext[1]{Corresponding author.}
\footnotetext[2]{State Key Laboratory of Blockchain and Data Security, Zhejiang University.}
\footnotetext[3]{College of Computer Science and Technology, Zhejiang University.}
\footnotetext[4]{Hangzhou High-Tech Zone (Binjiang) Institute of Blockchain and Data Security.}

\renewcommand{\thefootnote}{\arabic{footnote}}
\setcounter{footnote}{0}

\begin{abstract}
Softmax Loss (SL) is widely applied in recommender systems (RS) and has demonstrated effectiveness. This work analyzes SL from a pairwise perspective, revealing two significant limitations: 1) the relationship between SL and conventional ranking metrics like DCG is not sufficiently tight; 2) SL is highly sensitive to false negative instances. Our analysis indicates that these limitations are primarily due to the use of the exponential function. To address these issues, this work extends SL to a new family of loss functions, termed Pairwise Softmax Loss (PSL), which replaces the exponential function in SL with other appropriate activation functions. While the revision is minimal, we highlight three merits of PSL: 1) it serves as a tighter surrogate for DCG with suitable activation functions; 2) it better balances data contributions; and 3) it acts as a specific BPR loss enhanced by Distributionally Robust Optimization (DRO). We further validate the effectiveness and robustness of PSL through empirical experiments. The code is available at \url{https://github.com/Tiny-Snow/IR-Benchmark}.
\end{abstract}

\section{Introduction} \label{sec:introduction}

Nowadays, recommender systems (RS) have permeated various personalized services \citep{ko2022survey,zhang2019deep,huang2023aligning,huang2024large}. What sets recommendation apart from other machine learning tasks is its distinctive emphasis on ranking \citep{liu2009learning}. Specifically, RS aims to retrieve positive items in higher ranking positions (\ie giving larger prediction scores) over others and adopts specific ranking metrics (\eg DCG \citep{jarvelin2017ir} and MRR \citep{lu2023optimizing}) to evaluate its performance.

The emphasis on ranking inspires a surge of research on loss functions in RS. Initial studies treated recommendation primarily as a classification problem, utilizing pointwise loss functions (\eg BCE \citep{he2017ncf}, MSE \citep{he2017nfm}) to optimize models. Recognizing the inherent ranking nature of RS, pairwise loss functions (\eg BPR \citep{rendle2009bpr}) were introduced to learn a partial ordering among items. More recently, Softmax Loss (SL) \citep{wu2024effectiveness} has integrated contrastive learning paradigms \citep{liu2021self, wu2024understanding}, augmenting positive items as compared with negative ones, achieving state-of-the-art (SOTA) performance.

While SL has proven effective, it still suffers from \textbf{two limitations}: 1) SL can be used to approximate ranking metrics, \eg DCG and MRR \citep{wu2024effectiveness, bruch2019analysis}, but their relationships are not sufficiently tight. Specifically, SL uses the exponential function $\exp(\cdot)$ as the \emph{surrogate activation} to approximate the Heaviside step function in DCG, resulting in a notable gap, especially when the surrogate activation takes larger values. 2) SL is sensitive to noise (\eg false negatives \citep{wu2023bsl}). Gradient analysis reveals that SL assigns higher weights to negative instances with large prediction scores, while the weights are rather skewed and governed by the exponential function. This characteristic renders the model highly sensitive to false negative noise. Specifically, false negative instances are common in RS, as a user's lack of interaction with an item might stem from unawareness rather than disinterest \citep{chen2023bias,chen2021autodebias,wang2024llm4dsr}. These instances would receive disproportionate emphasis, potentially dominating the training direction, leading to performance degradation and training instability.

To address these challenges, we propose a new family of loss functions, termed \textbf{Pairwise Softmax Loss (PSL)}. PSL first reformulates SL in a pairwise manner, where the loss is applied to the score gap between positive-negative pairs. Such pairwise perspective is more fundamental to recommendation as the ranking metrics are also pairwise dependent. Recognizing that the primary weakness of SL lies in its use of the exponential function, PSL replaces this with other surrogate activations. While this extension is straightforward, it brings significant theoretical merits:

\begin{itemize}[topsep=0pt,leftmargin=10pt]
    \setlength{\itemsep}{0pt}
    \item \textbf{Tighter surrogate for ranking metrics.} We establish theoretical connections between PSL and conventional ranking metrics, \eg DCG. By choosing appropriate surrogate activations, such as ReLU or Tanh, we demonstrate that PSL achieves a tighter DCG surrogate loss than SL.
    \item \textbf{Control over the weight distribution.} PSL provides flexibility in choosing surrogate activations that control the weight distribution of training instances. By substituting the exponential function with an appropriate surrogate activation, \eg ReLU or Tanh, PSL can mitigate the excessive impact of false negatives, thus enhancing robustness to noise.
    \item \textbf{Theoretical connections with BPR loss.} Our analyses reveal that optimizing PSL is equivalent to performing Distributionally Robust Optimization (DRO) \citep{shapiro2017distributionally} over the conventional pairwise loss BPR \citep{rendle2009bpr}. DRO is a theoretically sound framework where the optimization is not only on a fixed empirical distribution but also across a set of distributions with adversarial perturbations. This DRO characteristic endows PSL with stronger generalization and robustness against out-of-distribution (OOD), especially given that such distribution shifts are common in RS, \eg shifts in user preference and item popularity \citep{chen2023bias,zhang2023invariant,zhao2022popularity}.
\end{itemize}

Our analyses underscore the theoretical effectiveness and robustness of PSL. To empirically validate these advantages, we implement PSL with typical surrogate activations (Tanh, Atan, ReLU) and conduct extensive experiments on four real-world datasets across three experimental settings: 1) IID setting \citep{he2020lightgcn} where training and test distributions are identically distributed \citep{mohri2018foundations}; 2) OOD setting \citep{wang2024distributionally} with distribution shifts in item popularity; 3) Noise setting \citep{wu2023bsl} with a certain ratio of false negatives. Experimental results demonstrate the superiority of PSL over existing losses in terms of recommendation accuracy, OOD robustness, and noise resistance.


\section{Preliminaries} \label{sec:review-softmax}

\textbf{Task formulation.}
We will conduct our discussion in the scope of collaborative filtering (CF) \citep{su2009survey}, a widely-used recommendation scenario. Given the user set $\mathcal{U}$ and item set $\mathcal{I}$, CF dataset $\mathcal{D} \subset \mathcal{U} \times \mathcal{I}$ is a collection of observed interactions, where each instance $(u, i) \in \mathcal{D}$ means that user $u$ has interacted with item $i$ (\eg clicks, reviews, \etc). For each user $u$, we denote $\mathcal{P}_u = \{ i \in \mathcal{I} : (u, i) \in \mathcal{D}\}$ as the set of positive items of $u$, while $\mathcal{I} \setminus \mathcal{P}_u$ represents the negative items. 

The goal of recommendation is to learn a recommendation model, or essentially a scoring function $f(u, i) : \mathcal{U} \times \mathcal{I} \to \mathbb{R}$ that quantifies the preference of user $u$ on item $i$ accurately. Modern RS often adopts an embedding-based paradigm \citep{koren2009matrix}. Specifically, the model maps user $u$ and item $i$ into $d$-dim embeddings $\mathbf{u}, \mathbf{v} \in \mathbb{R}^{d}$, and predicts their preference score $f(u, i)$ based on embedding similarity. The cosine similarity is commonly utilized in RS and has demonstrated particular effectiveness \citep{chen2023adap}. Here we set $f(u, i) = \frac{\mathbf{u} \cdot \mathbf{v}}{\Vert \mathbf{u}\Vert \Vert \mathbf{v} \Vert} \cdot \frac{1}{2}$, where the scaling factor $\frac{1}{2}$ is introduced for faciliating analyses and can be absorbed into the temperature hyperparameter ($\tau$). The scores $f(u, i)$ are subsequently utilized to rank items for generating recommendations.

\textbf{Ranking metrics.}
The Discounted Cumulative Gain (DCG) \citep{jarvelin2017ir} is a prominent ranking metric for evaluating the recommendation quality. Formally, for each user $u$, DCG is calculated as follows: 
\begin{equation} \label{eq:DCG}
    \mathrm{DCG}(u) = \sum_{i \in \mathcal{P}_u} \frac{1} { \log_2(1 + \pi_u(i))}
\end{equation}
where $\pi_u(i)$ is the ranking position of item $i$ in the ranking list sorted by the scores $f(u, i)$. DCG quantifies the cumulative gain of positive items, discounted by their ranking positions. Similarly, the Mean Reciprocal Rank (MRR) \citep{lu2023optimizing,argyriou2020microsoft} is another popular ranking metric using the reciprocal of the ranking position as the gain, \ie $\mathrm{MRR}(u) = \sum_{i \in \mathcal{P}_u} 1 / \pi_u(i)$. Additionally, other metrics such as Recall \citep{fayyaz2020recommendation}, Precision \citep{fayyaz2020recommendation}, and AUC \citep{silveira2019good} are also utilized in RS  \citep{fayyaz2020recommendation}. Compared to these metrics, DCG and MRR focus more on the top-ranked recommendations, thus attracting increasing attention in RS \citep{wu2024effectiveness,rashed2021guided}. In this work, we aim to explore the surrogate loss for DCG and MRR.

\textbf{Recommendation losses.}
To train recommendation models effectively, a series of recommendation losses has been developed. Recent work on loss functions can mainly be classified into three types:
\begin{itemize}[topsep=0pt,leftmargin=10pt]
    \setlength{\itemsep}{0pt}
    \item \textbf{Pointwise loss} (\eg BCE \citep{he2017ncf}, MSE \citep{he2017nfm}, \etc.) formulates recommendation as a specific classification or regression task, and the loss is applied to each positive and negative instance separately. Specifically, for each user $u$, the pointwise loss is defined as
    \begin{equation} \label{eq:pointwise}
        \mathcal{L}_{\textnormal{pointwise}}(u) = -\sum_{i \in \mathcal{P}_u} \log(\varphi^+(f(u, i))) - \sum_{j \in \mathcal{I} \setminus \mathcal{P}_u} \log(\varphi^-(f(u, j)))
    \end{equation}
    where $\varphi^+(\cdot)$ and $\varphi^-(\cdot)$ are the activation functions adapted for different loss choices.
    \item \textbf{Pairwise loss} (\eg BPR \citep{rendle2009bpr}, \etc.) optimizes partial ordering among items, which is applied to the score gap between negative-positive pairs. BPR \citep{rendle2009bpr} is a representative pairwise loss, which is defined as
    \begin{equation} \label{eq:bpr}
        \mathcal{L}_{\textnormal{BPR}}(u) = \sum_{i \in \mathcal{P}_u} \sum_{j \in \mathcal{I} \setminus \mathcal{P}_u} \log \sigma(f(u, j) - f(u, i))
    \end{equation}
    where $\sigma$ denotes the activation function that approximates the Heaviside step function. The basic intuition behind BPR loss is to let the positive instances have higher scores than negative instances. In practice, there are various choices of the activation function. For instance, \citet{rendle2009bpr} originally uses the sigmoid function, and the resultant BPR loss can approximate AUC metric.
    \item \textbf{Softmax Loss} (\ie SL \citep{wu2024effectiveness}) normalizes the predicted scores into a multinomial distribution \citep{casella2024statistical} and optimizes the probability of positive instances over negative ones \citep{cao2007l2r}, which is defined as
    \begin{equation} \label{eq:softmax-original}
        \mathcal{L}_{\textnormal{SL}}(u) = 
        -\sum_{i \in \mathcal{P}_u} \log\left(\frac{\exp(f(u, i) / \tau)}{\sum_{j \in \mathcal{I}} \exp(f(u, j) / \tau)}\right)
    \end{equation}
    where $\tau$ is the temperature hyperparameter. SL can also be understood as a specific contrastive loss, which draws positive instances $(u,i)$ closer and pushes negative instances $(u,j)$ away \citep{wu2024understanding}.
\end{itemize}

\section{Analyses on Softmax Loss from Pairwise Perspective} \label{sec:analysis-softmax}
In this section, we aim to first represent the Softmax Loss (SL) in a pairwise form, followed by an analysis of its relationship with the DCG metric, where two limitations of SL are exposed.

\textbf{Pairwise form of SL.}
To facilitate the analysis of SL and to build its relationship with the DCG metric, we rewrite SL (\cf \cref{eq:softmax-original}) in the following pairwise form:
\begin{equation} \label{eq:softmax-pairwise}
    \mathcal{L}_{\textnormal{SL}}(u) = 
    \sum_{i \in \mathcal{P}_u} \log \left(\sum_{j \in \mathcal{I}} \exp(d_{uij} / \tau)\right)
    , \quad
    \text{where }
    d_{uij} = f(u, j) - f(u, i)
\end{equation}
\cref{eq:softmax-pairwise} indicates that SL is penalized based on the score gap between negative-positive pairs, \ie $d_{uij} = f(u,j) - f(u,i)$. This concise expression is fundamental for ranking, as it optimizes the relative order of instances rather than their absolute values. 

\textbf{Connections between SL and DCG.}
We now analyze the connections between SL and the DCG metric (\cf \cref{eq:softmax-pairwise,eq:DCG}), which could enhance our understanding of the advantages and disadvantages of SL. Our analysis follows previous work \citep{wu2024effectiveness,bruch2019analysis}, which begins by relaxing the negative logarithm of DCG with
\begin{equation} \label{eq:softmax-dcg-surrogate-bound}
    -\log \mathrm{DCG}(u) + \log |\mathcal{P}_u|
    \leq -\log \left(\dfrac{1}{|\mathcal{P}_u|} \sum_{i \in \mathcal{P}_u} \dfrac{1}{\pi_u(i)} \right)
    \leq \dfrac{1}{|\mathcal{P}_u|} \sum_{i \in \mathcal{P}_u} \log \pi_u(i)
\end{equation}
where the first inequality holds due to $\log_2(1 + \pi_u(i)) \leq \pi_u(i)$, and the second inequality holds due to Jensen's inequality \citep{jensen1906fonctions}. Note that the ranking position $\pi_u(i)$ of item $i$ can be expressed as
\begin{equation} \label{eq:softmax-dcg-pi}
    \pi_u(i) 
    = \sum_{j \in \mathcal{I}} \mathbb{I}(f(u, j) \geq  f(u, i)) 
    = \sum_{j \in \mathcal{I}} \mathbb{\delta}(d_{uij}) 
\end{equation}
where $\delta(\cdot)$ denotes the Heaviside step function, with $\delta(x)=1$ for $x \geq 0$ and $\delta(x)=0$ for $x < 0$. Since $\delta(d_{uij}) \leq \exp(d_{uij} / \tau)$ holds for all $\tau > 0$, we deduce that SL is a smooth upper bound of \cref{eq:softmax-dcg-surrogate-bound}, and thus serves as a reasonable surrogate loss for DCG and MRR metrics\footnote{Note that the middle term in \cref{eq:softmax-dcg-surrogate-bound}, \ie $-\log \left(\frac{1}{|\mathcal{P}_u|} \sum_{i \in \mathcal{P}_u} {1}/{\pi_u(i)} \right)$, is exactly $-\log \mathrm{MRR}(u)$. Therefore, SL serves as an upper bound of the negative logarithm of DCG and MRR, and minimizing SL leads to the improvement of these ranking metrics.}.

However, our analysis also reveals \textbf{two limitations of SL}:

\begin{itemize}[topsep=0pt,leftmargin=10pt]
    \setlength{\itemsep}{0pt}
    \item \textbf{Limitation 1: SL is not tight enough as a DCG surrogate loss.}
    There remains a significant gap between the Heaviside step function $\delta(\cdot)$ and the exponential function $\exp(\cdot)$, especially when $d_{uij}$ reaches a relatively large value, where $\exp(\cdot)$ becomes substantially larger than $\delta(\cdot)$. This gap is further exacerbated by the temperature $\tau$. Practically, we find that the optimal $\tau$ is usually chosen to be less than 0.2 (\cf \cref{sec:appendix-experiments-best-hyperparameters}). Given the explosive nature of $\exp(\cdot)$, the gap becomes extremely large, potentially leading to suboptimal performance of SL in optimizing DCG.
    \item \textbf{Limitation 2: SL is highly sensitive to noise (\eg false negative instances).}
    False negative instances \citep{wu2023bsl} are common in the typical RS. This is often due to the exposure bias \citep{chen2023bias}, where a user's lack of interaction with an item might stem from unawareness rather than disinterest. Unfortunately, SL is highly sensitive to these false negative instances. On one hand, these instances $(u, j)$, which may exhibit patterns similar to true positive ones, are difficult for the model to differentiate and often receive larger predicted scores, thus bringing potentially larger $d_{uij}$ for positive items $i$. As analyzed in Limitation 1, these instances can significantly enlarge the gap between SL and DCG due to the exponential function, causing the optimization to deviate from the DCG metric.
\end{itemize}

\textbf{Gradient analysis of SL.}
Another perspective to support the view of Limitation 2 comes from the gradient analysis. Specifically, the gradient of SL \wrt $d_{uij}$ is
\begin{equation} \label{eq:softmax-gradient}
    \frac{\partial \mathcal{L}_{\textnormal{SL}}(u)}{\partial d_{uij}} 
    = \frac{\exp(d_{uij} / \tau) / \tau}{|\mathcal{I}|\mathbb{E}_{j' \sim \mathcal{I}} [\exp(d_{uij'} / \tau)]}
    \textcolor{darkgreen}{\quad \propto \exp(d_{uij} / \tau) / \tau}
\end{equation}
As can be seen, SL implicitly assigns a weight to the gradient of each negative-positive pair, where the weight is proportional to $\exp(d_{uij} / \tau)$. This suggests that instances with larger $d_{uij}$ will receive larger weights. While this property may be desirable for hard mining \citep{wu2024effectiveness}, which can accelerate convergence, it also means that false negative instances, which typically have larger $d_{uij}$, will obtain disproportionately large weights, as shown in the weight distribution of SL in \cref{fig:psl-activation-d}. Therefore, the optimization of SL can be easily dominated by false negative instances, leading to performance drops and training instability.

\textbf{Discussions on DRO robustness and noise sensitivity.}
Recent work \citep{wu2023bsl} claims that SL exhibits robustness to noisy data through Distributionally Robust Optimization (DRO) \citep{shapiro2017distributionally}. However, we argue that this is not the case. DRO indeed can enhance model robustness to distribution shifts, but it also increases the risk of noise sensitivity, as demonstrated by many studies on DRO \citep{zhai2021doro,nietert2024outlier}. Intuitively, DRO emphasizes hard instances with larger losses, making noisy data contribute more rather than less to the optimization. This is also demonstrated from the experiments with false negative instances (\cf Figure 8 in \citep{wu2023bsl}), where the improvements of SL over other baselines in Noise setting do not increase significantly but sometimes decay.


\section{Methodology} \label{sec:psl}

\subsection{Pairwise Softmax Loss} \label{sec:psl-definition-analysis}

Recognizing the limitations of SL, particularly its reliance on the unsatisfactory exponential function, we propose to extend SL with a more general family of losses, termed \textbf{Pairwise Softmax Loss (PSL)}. In PSL, the exponential function $\exp(\cdot)$ is replaced by other \emph{surrogate activations} $\sigma(\cdot)$ approximating the Heaviside step function $\delta(\cdot)$. For each user $u$, the PSL is defined as
\begin{equation} \label{eq:psl}
    \mathcal{L}_{\textnormal{PSL}}(u) = 
    \sum_{i \in \mathcal{P}_u} \log\left(\sum_{j \in \mathcal{I}} \sigma(d_{uij})^{{1}/{\tau}}
    \right)
\end{equation}
One might wonder why we apply the temperature outside the activation function (\ie extending $\exp(d_{uij})^{1/\tau}$ to $\sigma(d_{uij})^{1/\tau}$ )\footnote{Note that the equation $\exp(d_{uij}/\tau)=\exp(d_{uij})^{1/\tau}$ holds.} rather than within it (\ie extending $\exp(d_{uij}/\tau)$ to $\sigma(d_{uij}/\tau)$). This subtlety will be elucidated later as we demonstrate that the form in \cref{eq:psl} offers superior properties over the alternative.

Our PSL provides a flexible framework for selecting better activation functions, allowing the loss to exhibit improved properties compared to SL. We advocate for three activations, including \textbf{PSL-tanh}: $\sigma_{\textnormal{tanh}} = \tanh(d_{uij}) + 1$, \textbf{PSL-atan}: $\sigma_{\textnormal{atan}} = \arctan(d_{uij}) + 1$, and \textbf{PSL-relu}: $\sigma_{\textnormal{relu}} = \mathrm{ReLU}(d_{uij} + 1)$. In the following, we will discuss the advantages of PSL and provide evidence for the selection of these surrogate activations.

\textbf{Advantage 1: PSL is a better surrogate for ranking metrics.} To highlight the advantages of replacing $\exp(\cdot)$ with alternative surrogate activations, we present the following lemma:

\begin{lemma} \label{lemma:psl-dcg-surrogate}
If the condition
\begin{equation} \label{eq:psl-activation-condition}
    \delta(d_{uij}) \leq \sigma(d_{uij}) \leq \exp(d_{uij})
\end{equation}
is satisfied for any $d_{uij} \in [-1, 1]$, then PSL serves as a tighter DCG surrogate loss compared to SL.
\end{lemma}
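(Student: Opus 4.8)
The plan is to show that $\mathcal{L}_{\textnormal{PSL}}(u)$ is squeezed between the DCG surrogate appearing in \cref{eq:softmax-dcg-surrogate-bound} and $\mathcal{L}_{\textnormal{SL}}(u)$; concretely, that $\frac{1}{|\mathcal{P}_u|}\sum_{i \in \mathcal{P}_u} \log \pi_u(i) \leq \frac{1}{|\mathcal{P}_u|}\mathcal{L}_{\textnormal{PSL}}(u) \leq \frac{1}{|\mathcal{P}_u|}\mathcal{L}_{\textnormal{SL}}(u)$. Chaining the left inequality with \cref{eq:softmax-dcg-surrogate-bound} certifies that PSL remains a valid upper bound of $-\log\mathrm{DCG}(u) + \log|\mathcal{P}_u|$, hence a legitimate surrogate, while the right inequality says this upper bound never exceeds SL, i.e. PSL is the \emph{tighter} surrogate. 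The entire argument amounts to lifting the pointwise hypothesis \cref{eq:psl-activation-condition} through a chain of monotone operations.

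First I would pin down the domain. Because $f(u, i)$ is a cosine similarity scaled by $\tfrac12$, it lies in $[-\tfrac12, \tfrac12]$, so every score gap satisfies $d_{uij} = f(u,j) - f(u,i) \in [-1, 1]$; this is exactly the range on which \cref{eq:psl-activation-condition} is assumed, so the hypothesis applies to every negative–positive pair. Next I would raise the three-term inequality $\delta(d_{uij}) \leq \sigma(d_{uij}) \leq \exp(d_{uij})$ to the power $1/\tau$. Since $\tau > 0$ and all three quantities are nonnegative, the map $x \mapsto x^{1/\tau}$ is increasing on $[0,\infty)$ and preserves the inequality, giving $\delta(d_{uij})^{1/\tau} \leq \sigma(d_{uij})^{1/\tau} \leq \exp(d_{uij})^{1/\tau}$. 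The crucial simplification is that the two outer terms collapse onto the quantities appearing in DCG and SL respectively: $\delta(d_{uij})^{1/\tau} = \delta(d_{uij})$ since $\delta \in \{0,1\}$, and $\exp(d_{uij})^{1/\tau} = \exp(d_{uij}/\tau)$. This is precisely why the temperature is placed \emph{outside} the activation in \cref{eq:psl}.

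I would then propagate this per-pair bound upward. Summing over $j \in \mathcal{I}$ and invoking \cref{eq:softmax-dcg-pi} yields $\pi_u(i) \leq \sum_{j} \sigma(d_{uij})^{1/\tau} \leq \sum_{j}\exp(d_{uij}/\tau)$; applying the increasing function $\log$ and then summing over $i \in \mathcal{P}_u$ (and dividing by $|\mathcal{P}_u|$) produces exactly the desired sandwich, with the middle term equal to $\tfrac{1}{|\mathcal{P}_u|}\mathcal{L}_{\textnormal{PSL}}(u)$ by \cref{eq:psl} and the right term equal to $\tfrac{1}{|\mathcal{P}_u|}\mathcal{L}_{\textnormal{SL}}(u)$ by \cref{eq:softmax-pairwise}. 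Combined with \cref{eq:softmax-dcg-surrogate-bound}, this completes the claim.

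The steps are individually routine, so the only real care is with well-definedness and the domain restriction. Nonnegativity of $\sigma$ — needed for the fractional power and its monotonicity — is automatic, since $\sigma(d_{uij}) \geq \delta(d_{uij}) \geq 0$; and $\pi_u(i) \geq 1$ (because $\delta(d_{uii}) = \delta(0) = 1$) guarantees every logarithm acts on an argument at least $1$. The one genuinely subtle point — and the reason the lemma is stated only for $d_{uij} \in [-1,1]$ — is that the hypothesis constrains the \emph{bare} activation $\sigma$, and the argument succeeds only because the exponent $1/\tau$ commutes cleanly with $\delta$ and $\exp$; pushing $\tau$ inside $\sigma$ would instead demand the sandwich hold at $d_{uij}/\tau$, which can leave $[-1,1]$ for small $\tau$ and break the bound.
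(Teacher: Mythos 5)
Your proposal is correct and follows essentially the same route as the paper's own proof: raise the hypothesis $\delta(d_{uij}) \leq \sigma(d_{uij}) \leq \exp(d_{uij})$ to the power $1/\tau$ (using $\delta^{1/\tau} = \delta$ and $\exp(d_{uij})^{1/\tau} = \exp(d_{uij}/\tau)$), then chain the resulting pointwise bound through \cref{eq:softmax-dcg-pi} and \cref{eq:softmax-dcg-surrogate-bound} to sandwich $\mathcal{L}_{\textnormal{PSL}}$ between the DCG bound and $\mathcal{L}_{\textnormal{SL}}$. The paper states this very tersely; your version merely makes explicit the intermediate steps (monotonicity of $x \mapsto x^{1/\tau}$, summation over $j$, taking logarithms) and the domain justification $d_{uij} \in [-1,1]$, all of which are consistent with the paper's argument.
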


The proof is presented in Appendix \ref{sec:appendix-proof-psl-dcg-surrogate}. This lemma reveals that PSL could be a tighter surrogate loss for DCG compared to SL. Additionally, it provides guidance on the selection of a proper surrogate activation --- we may choose the activation that lies between $\exp(\cdot)$ and $\delta(\cdot)$. As demonstrated in \cref{fig:psl-activation}, our chosen surrogate activations $\sigma_{\textnormal{tanh}}$, $\sigma_{\textnormal{atan}}$, and $\sigma_{\textnormal{relu}}$ adhere to this principle. 

\begin{figure}[t]
    \centering
    \begin{subfigure}{0.49\textwidth}
        \centering
        \includegraphics[width=\textwidth]{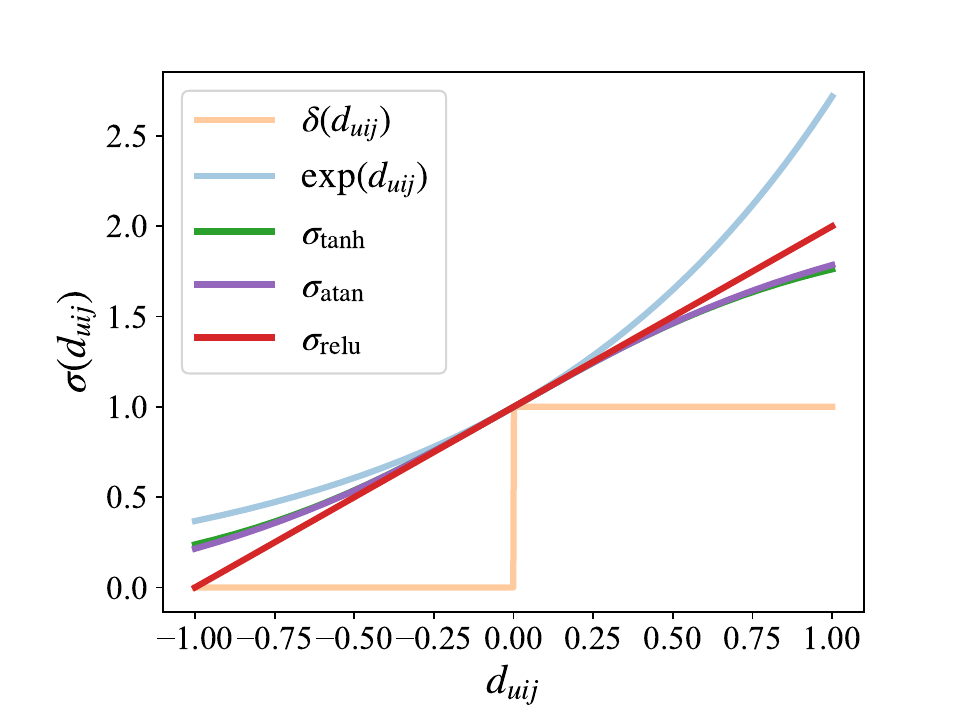}
        \caption{Surrogate activations.}
        \label{fig:psl-activation}
    \end{subfigure}
    \begin{subfigure}{0.49\textwidth}
        \centering
        \includegraphics[width=\textwidth]{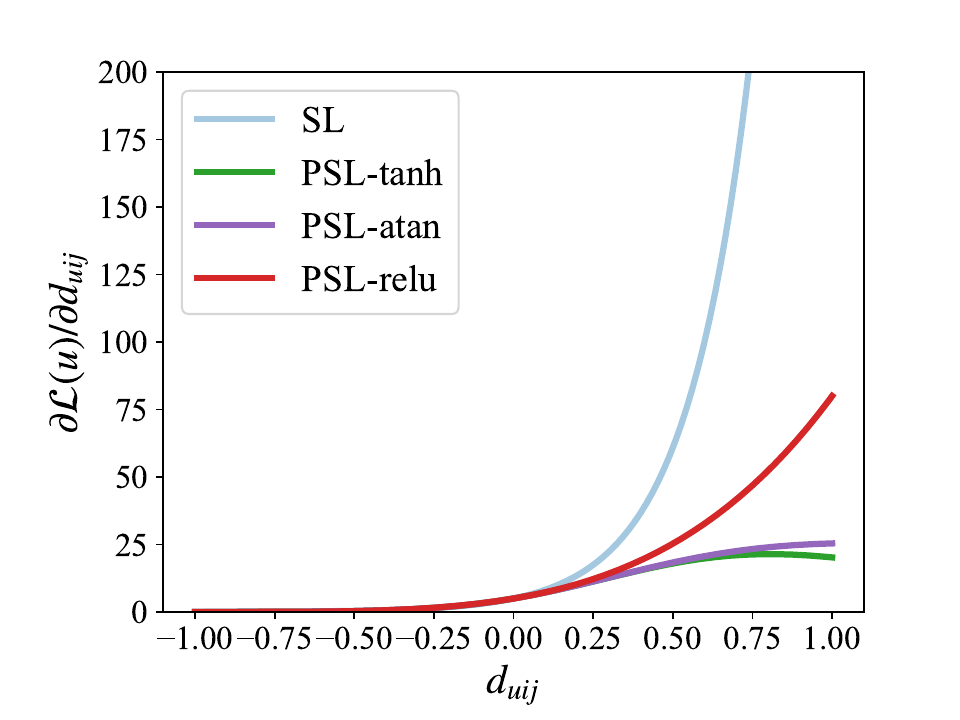}
        \caption{Weight distributions.}
        \label{fig:psl-activation-d}
    \end{subfigure}
    \caption{(a) Illustration of different surrogate activations. (b) The weight distribution of SL as compared with PSL using three different surrogate activations. Here we set $\tau=0.2$, which typically achieves optimal results in practice.}
\end{figure}

\textbf{Advantage 2: PSL controls the weight distribution.}  The gradient of PSL \wrt $d_{uij}$ is
\begin{equation} \label{eq:psl-gradient}
    \frac{\partial \mathcal{L}_{\textnormal{PSL}}(u)}{\partial d_{uij}}
    = \frac{\sigma'(d_{uij}) \cdot \sigma(d_{uij})^{1 / \tau - 1} / \tau}{|\mathcal{I}|\mathbb{E}_{j' \sim \mathcal{I}} [\sigma(d_{uij'})^{1/\tau}]}
    \textcolor{darkgreen}{\quad \propto \sigma'(d_{uij}) \cdot \sigma(d_{uij})^{1 / \tau - 1} / \tau}
\end{equation}
This implies that the shape of the weight distribution is determined by the choice of surrogate activation. By selecting appropriate activations, PSL can better balance the contributions of instances during training. For example, the three activations advocated before can explicitly mitigate the explosive issue on larger $d_{uij}$ (\cf \cref{fig:psl-activation-d}), bringing better robustness to false negative instances.

One might argue that adjusting $\tau$ in SL could improve noise resistance. However, such adjustments do not alter the fundamental shape of the weight distribution, which remains exponential. Furthermore, as we discuss subsequently, $\tau$ plays a crucial role in controlling robustness against distribution shifts. Thus, indiscriminate adjustments to $\tau$ may compromise out-of-distribution (OOD) robustness.

\textbf{Advantage 3: PSL is a DRO-empowered BPR loss.} We establish a connection between PSL and BPR \citep{rendle2009bpr} based on Distributionally Robust Optimization (DRO) \citep{shapiro2017distributionally,hu2013kullback}. Specifically, optimizing PSL is equivalent to applying a KL divergence DRO on negative item distribution over BPR loss (\cf \cref{eq:bpr}), as demonstrated in the following theorem\footnote{Note that $e^{\log(\sigma(d_{uij})) / \tau} = \sigma(d_{uij})^{1/\tau}$ holds, thus the PSL in \cref{eq:psl-dro-equivalent} is identical to the one in \cref{eq:psl}.}:

\begin{theorem} \label{thm:psl-lee-structure}
For each user $u$ and its positive item $i$, let $P = P(j | u, i)$ be the uniform distribution over $\mathcal{I}$. Given a robustness radius $\eta > 0$, consider the uncertainty set $\mathcal{Q}$ consisting of all perturbed distributions $Q = Q(j | u, i)$ satisfying: (i) $Q$ is absolutely continuous \wrt $P$, \ie $Q \ll P$; (ii) the KL divergence between $Q$ and $P$ is constrained by $\eta$, \ie $D_{\textnormal{KL}}(Q \| P) \leq \eta$. Then, optimizing PSL is equivalent to performing DRO over BPR loss, \ie
\begin{equation} \label{eq:psl-dro-equivalent}
    \min\underbrace{ \left\{ \mathbb{E}_{i \sim \mathcal{P}_u} \left[ \log \mathbb{E}_{j \sim \mathcal{I}} \left[e^{ \log(\sigma(d_{uij})) / \tau }\right] \vphantom{\sup_{Q \in \mathcal{Q}}} \right] \right\} }_{\mathcal{L}_{\textnormal{PSL}}(u)}
    \Leftrightarrow
    \min \underbrace{ \left\{ \mathbb{E}_{i \sim \mathcal{P}_u} \left[ \sup_{Q \in \mathcal{Q}} \mathbb{E}_{j \sim Q(j | u, i)} \left[ \log\sigma(d_{uij}) \right] \right] \right\} }_{\mathcal{L}_{\textnormal{BPR-DRO}}(u)}
\end{equation}
where $\tau = \tau(\eta)$ is a temperature parameter controlled by $\eta$.
\end{theorem}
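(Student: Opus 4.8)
The plan is to reduce Theorem~\ref{thm:psl-lee-structure} to the standard strong-duality formula for KL-constrained distributionally robust optimization, specialized to the per-pair BPR quantity $\ell := \log\sigma(d_{uij})$ viewed as a random variable in the negative index $j \sim P$. Because $\mathcal{I}$ is finite and each admissible surrogate satisfies $\delta(d_{uij}) \le \sigma(d_{uij}) \le \exp(d_{uij})$ on the relevant range, $\ell$ is bounded and the logarithmic moment-generating function $\log\mathbb{E}_{j\sim P}[e^{\ell/\beta}]$ is finite for every $\beta>0$. This boundedness is exactly the regularity that licenses the duality and rules out the degenerate behaviors I will later need to exclude.

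First I would invoke the Donsker--Varadhan/Gibbs variational identity underlying KL-DRO \citep{hu2013kullback,shapiro2017distributionally}: for the uniform reference $P$ and uncertainty set $\mathcal{Q}=\{Q\ll P : D_{\textnormal{KL}}(Q\|P)\le\eta\}$,
\begin{equation*}
    \sup_{Q\in\mathcal{Q}} \mathbb{E}_{j\sim Q}[\ell]
    = \inf_{\beta>0} \Big\{ \beta\,\log\mathbb{E}_{j\sim P}\big[e^{\ell/\beta}\big] + \beta\eta \Big\}.
\end{equation*}
I would sketch its derivation by Lagrangian duality: writing the likelihood ratio $L=\diff Q/\diff P$ with constraints $L\ge 0$, $\mathbb{E}_P[L]=1$, $\mathbb{E}_P[L\log L]\le\eta$, attaching multipliers $\beta\ge 0$ and $\lambda$, and maximizing the Lagrangian pointwise in $L$ produces the exponentially tilted optimizer $L^\star \propto e^{\ell/\beta}$, i.e. $Q^\star(j\mid u,i)\propto \sigma(d_{uij})^{1/\beta}$. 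Substituting $L^\star$ back collapses the supremum to the log-MGF-plus-penalty expression above. The tilted optimizer is precisely the softmax-type reweighting sitting inside PSL, which is the conceptual heart of the equivalence.

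Next I would match the two problems. Substituting $\ell=\log\sigma(d_{uij})$ and using $e^{\log\sigma(d_{uij})/\beta}=\sigma(d_{uij})^{1/\beta}$, the term $\log\mathbb{E}_{j\sim P}[e^{\ell/\beta}]$ is exactly the inner summand of $\mathcal{L}_{\textnormal{PSL}}(u)$ in \cref{eq:psl} at temperature $\tau=\beta$. Hence, for each positive item $i$, the BPR-DRO inner supremum equals $\inf_{\beta>0}\{\beta\,g_\beta(u,i)+\beta\eta\}$ with $g_\beta(u,i)$ the PSL inner term. Fixing the shared temperature $\tau=\beta$ selects one member of this family, and I would observe that when minimizing over the model parameters $\theta$ (which enter only through $d_{uij}$), the positive factor $\beta$ and the additive constant $\beta\eta$ leave the $\theta$-minimizer unchanged; consequently the $\theta$-subproblem is identical to minimizing PSL. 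The advertised correspondence $\tau=\tau(\eta)$ is then pinned down by the first-order condition $\eta = D_{\textnormal{KL}}(Q^\star\|P)$ evaluated at the tilt $1/\tau$, and monotonicity follows from the endpoint behavior of the tilt: as $\beta\to\infty$ we have $Q^\star\to P$ so $\eta\to 0$, while as $\beta\to 0^+$ the tilt concentrates on the worst negative so $\eta\to\infty$. Thus larger $\eta$ forces smaller $\tau$, giving the formal basis for the paper's claim that $\tau$ governs robustness to distribution shift. As a by-product, the exponent $1/\beta$ landing \emph{outside} $\sigma$ explains why PSL places $\tau$ outside the activation in \cref{eq:psl} rather than inside.

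The main obstacle I anticipate is not the algebra but the rigor of the duality step: establishing strong duality (no duality gap), attainment of the infimum at some $\beta^\star\in(0,\infty)$, and the well-definedness and monotonicity of the map $\tau(\eta)$. I would discharge the first two via boundedness of $\ell$ (making the logarithmic moment-generating function finite and smooth in $1/\beta$) together with convexity of $\beta\mapsto\beta\log\mathbb{E}_P[e^{\ell/\beta}]+\beta\eta$, excluding the boundary cases $\beta\to 0^+$ and $\beta\to\infty$ by the limiting KL values computed above. A genuinely delicate point is that the outer $\mathbb{E}_{i\sim\mathcal{P}_u}$ sits outside the infimum, so in principle each $(u,i)$ could carry its own dual temperature; reconciling this with the single shared $\tau$ of PSL --- i.e. arguing that a common $\tau=\tau(\eta)$ faithfully represents the aggregate DRO problem --- is the part I would treat as the technical core, rather than the otherwise mechanical substitution.
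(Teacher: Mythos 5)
Your proposal follows essentially the same route as the paper: the paper establishes the KL-DRO strong duality (\cref{lemma:dro-lee-structure}, derived from the general $\phi$-divergence duality of \cref{thm:dro-phi-divergence} via likelihood ratios, Lagrangian duality, and Fenchel conjugation) and then substitutes $\ell = \log\sigma(d_{uij})$ with $P$ uniform over $\mathcal{I}$, which is exactly your Donsker--Varadhan/exponential-tilting derivation in different packaging. Your closing concern---that each pair $(u,i)$ in principle carries its own optimal dual temperature while PSL uses a single shared $\tau$---is a genuine subtlety that the paper's proof also leaves unresolved (it simply identifies $\tau(\eta)$ with the optimal Lagrange multiplier after applying the lemma per term), so your treatment is, if anything, more explicit about where the remaining technical work lies.
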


The proof is presented in \cref{sec:appendix-proof-dro-lee-structure}. \cref{thm:psl-lee-structure} demonstrates how PSL, based on the DRO framework, is inherently robust to distribution shifts. This robustness is particularly valuable in RS, where user preference and item popularity may shift significantly. Therefore, PSL can be regarded as a robust generalization of BPR loss, offering better performance in OOD scenarios.

In addition, \cref{thm:psl-lee-structure} also gives insights into the \textbf{rationality of PSL} that differs from serving as a DCG surrogate loss, but rather as a DRO-empowered BPR loss:

\begin{itemize}[topsep=0pt,leftmargin=10pt]
    \setlength{\itemsep}{0pt}
    \item \textbf{Rationality of surrogate activations:} The activation function in BPR is originally chosen as an approximation to the Heaviside step function \citep{rendle2009bpr}. Since PSL is a generalization of BPR as stated in \cref{thm:psl-lee-structure}, it is reasonable to select the activations in PSL that aligns with the ones in BPR. Interestingly, this principle coincides with our analysis from the perspective of DCG surrogate loss.
    \item \textbf{Rationality of the position of temperature:} \cref{thm:psl-lee-structure} also rationalizes the extension form that places the temperature on the outside rather than inside. For the outside form (\ie $\sigma(d_{uij})^{1/\tau}$), \cref{thm:psl-lee-structure} holds, and the temperature $\tau$ can be interpreted as a Lagrange multiplier in DRO optimization, which controls the extent of distribution perturbation. However, for the inside form (\ie $\sigma(d_{uij} / \tau)$), \cref{thm:psl-lee-structure} no longer holds, and it would be challenging to establish the relationship between PSL and BPR.
    \item \textbf{Rationality of pairwise perspective:} Recent work such as BSL \citep{wu2023bsl} also reveals the DRO property of SL (\cf Lemma 1 in \citep{wu2023bsl}). However, we wish to highlight the distinctions between \cref{thm:psl-lee-structure} and \citet{wu2023bsl}'s analyses: 1) \citet{wu2023bsl} views SL from a pointwise perspective and associates it with a specific, less commonly used pointwise loss. In contrast, our analyses adopt a pairwise perspective and establish a relationship between PSL and the widely used BPR loss. 2) We construct a link between two families of losses with flexible activation selections, and \citet{wu2023bsl}'s analyses can be regarded as a special case within our broader framework.
\end{itemize}

The above analyses underscore the advantages of PSL and provide the principles to select surrogate activations. Remarkably, PSL is easily implemented and can be integrated into various recommendation scenarios. This can be achieved by merely replacing the exponential function $\exp(\cdot)$ in SL with another activation $\sigma(\cdot)$ surrogating the Heaviside step function, requiring minimal code modifications.

\subsection{Discussions} \label{sec:psl-discussion}

\textbf{Comparisons of two extension forms.} In previous discussions, we highlight the advantages of the form that positions the temperature outside (\ie $\sigma(d_{uij})^{1/\tau}$) over the inside (\ie $\sigma(d_{uij} / \tau)$). As discussed in the analyses of \cref{thm:psl-lee-structure}, the outside form can be regarded as a DRO-empowered BPR, while the inside form cannot, which ensures the robustness of PSL against distribution shifts.

Here we provide an additional perspective on the advantages of the outside form. In fact, the outside form facilitates the selection of surrogate activations. For instance, to ensure that PSL serves as a tighter DCG surrogate loss compared to SL (\ie ensure \cref{lemma:psl-dcg-surrogate} holds), the outside form only need to consider the condition \eqref{eq:psl-activation-condition} on the range of $d_{uij} \in [-1, 1]$. However, for the inside form, this condition should be satisfied on the entire domain of the activation $\sigma(\cdot)$, which complicates the selection of activation functions. Therefore, the outside form is more flexible and easier to implement. We further provide empirical evidence in \cref{sec:appendix-experiments-results-temperature-designs}, demonstrating that the inside form will lose the advantages of achieving tighter DCG surrogate loss, leading to compromised performance.

\textbf{Connections with other losses.} We further discuss the connections between PSL and other losses:

\begin{itemize}[topsep=0pt,leftmargin=10pt]
    \setlength{\itemsep}{0pt}
    \item \textbf{Connection with AdvInfoNCE \citep{zhang2024empowering}:} According to Theorem 3.1 in \citet{zhang2024empowering}, AdvInfoNCE can indeed be considered as a special case of PSL with $\sigma(\cdot)=\exp(\exp(\cdot))$. We argue that this activation is not a good choice as it would enlarge the gap between the loss and DCG. In fact, we have $-\log \mathrm{DCG} \leq \mathcal{L}_{\textnormal{PSL}} \leq \mathcal{L}_{\textnormal{SL}} \leq \mathcal{L}_{\textnormal{AdvInfoNCE}}$ (\cf \cref{sec:appendix-proof-all-dcg-surrogate} for proof). While AdvInfoNCE may achieve good performance in some specific OOD scenarios as tested in \citet{zhang2024empowering}, we argue that AdvInfoNCE is a looser DCG surrogate loss and would be highly sensitive to noise (\cf \cref{tab:iid-results,fig:ood-noise-results} in \cref{sec:experiments-iid} for empirical validation).
    \item \textbf{Connection with BPR \citep{rendle2009bpr}:} Besides the DRO relation stated in \cref{thm:psl-lee-structure}, we also derive the bound relation between BPR and PSL with the same activation, \ie $-\log \mathrm{DCG} \leq \mathcal{L}_{\textnormal{PSL}} \leq \log \mathcal{L}_{\textnormal{BPR}}$ (\cf \cref{sec:appendix-proof-all-dcg-surrogate} for proof). This relation clearly demonstrates the effectiveness of PSL over BPR --- performing DRO over BPR results robustness to distribution shifts, while also achieving a tighter surrogate of DCG, which is interesting (\cf \cref{tab:iid-results,tab:ood-shift-results} in \cref{sec:experiments-iid} for empirical validation). An intuitive explanation is that DCG focuses more on the higher-ranked items. Given that DRO would give more weight to the hard negative instances with larger prediction scores and higher positions, it would naturally narrow the gap between BPR and DCG.
\end{itemize}

\section{Experiments} \label{sec:experiments}

\subsection{Experimental Setup} \label{sec:experiments-setup}

\textbf{Testing scenarios.}
We adopt three representative testing scenarios to comprehensively evaluate model accuracy and robustness, including: 1) \textbf{IID setting:} the conventional testing scenario where training and test data are randomly split and identically distributed; 2) \textbf{OOD setting:} to assess the model's robustness on the out-of-distribution (OOD) data, we adopt a debiasing testing paradigm where the item popularity distribution shifts. We closely refer to \citet{zhang2023invariant}, \citet{wang2024distributionally}, and \citet{wei2021model}, sampling a test set where items are uniformly distributed while maintaining the long-tail nature of the training dataset; 3) \textbf{Noise setting:} to evaluate the model's sensitivity to noise, following \citet{wu2023bsl}, we manually impute a certain proportion of false negative items in the training data. The details of the above testing scenarios are provided in \cref{sec:appendix-experiments-datasets}.

\textbf{Datasets.}
Four widely-used datasets including Amazon-Book, Amazon-Electronic, Amazon-Movie \citep{he2016ups, mcauley2015image}, and Gowalla \citep{cho2011friendship} are used in our experiments. Considering the item popularity is not heavily skewed in the Amazon-Book and Amazon-Movie datasets, we turn to other conventional datasets, Amazon-CD \citep{he2016ups, mcauley2015image} and Yelp2018 \citep{yelp}, as replacements for OOD testing. All datasets are split into 80\% training set and 20\% test set, with 10\% of the training set further treated as the validation set. The details of the above datasets are summarized in \cref{sec:appendix-experiments-datasets}.

\textbf{Metrics.}
We closely refer to \citet{wu2023bsl} and \citet{zhang2024empowering}, adopting Top-$K$ metrics including $\mathrm{NDCG}@K$ \citep{jarvelin2017ir} and $\mathrm{Recall}@K$ \citep{fayyaz2020recommendation} for performance evaluation, where NDCG is the normalized DCG, \ie dividing DCG by the ideal value. Here we simply set $K = 20$ as in recent work \citep{wu2023bsl,zhang2024empowering} while observing similar results with other choices. For more details, please refer to \cref{sec:appendix-experiments-metrics}. 

\textbf{Compared methods.}
Five representative loss functions are compared in our experiments, including 1) the representative pairwise loss \textbf{BPR} (UAI'09 \citep{rendle2009bpr}); 2) the SOTA recommendation loss \textbf{Softmax Loss (SL)} (TOIS'24 \citep{wu2024effectiveness}) and its two DRO-enhancements \textbf{AdvInfoNCE} (NIPS'23 \citep{zhang2024empowering}) and \textbf{BSL} (ICDE'24 \citep{wu2023bsl}); 3) another SOTA loss \textbf{LLPAUC} (WWW'24 \citep{shi2024lower}) that optimizes the Lower-Left Partial AUC. Refer to \cref{sec:appendix-experiments-baselines} for more details about these baselines.

\textbf{Backbones.}
We also adopt three representative backbone models to evaluate the effectiveness of loss, including MF \citep{koren2009matrix}, LightGCN \citep{he2020lightgcn}, and XSimGCL \citep{yu2023xsimgcl}, see \cref{sec:appendix-experiments-backbones} for more details.

\textbf{Hyperparameter settings.}
A grid search is utilized to find the optimal hyperparameters. For all compared methods, we closely refer to the configurations provided in their respective publications to ensure their optimal performance. As we also carefully finetune SL, the improvements of existing methods over it are not as significant as those presented in their papers. The hyperparameter settings are provided in \cref{sec:appendix-experiments-hyperparameters}, where the detailed optimal hyperparameters for each method on each dataset and backbone are reported.

\begin{table}[t]
    \scriptsize
    \centering
    \caption{Performance comparison in terms of Recall@20 and NDCG@20 under the IID setting. The best result is bolded, and the blue-colored zone indicates that PSL is better than SL. Imp.\% denotes the NDCG@20 improvement of PSL over SL. The marker "*" indicates that the improvement is statistically significant ($p$-value $< 0.05$).}
    \begin{tabular}{c|l|cc|cc|cc|cc}
        \Xhline{1.2pt}
        \multirow{2}[4]{*}{\textbf{Model}} & \multicolumn{1}{c|}{\multirow{2}[4]{*}{\textbf{Loss}}} & \multicolumn{2}{c|}{\textbf{Amazon-Book}} & \multicolumn{2}{c|}{\textbf{Amazon-Electronic}} & \multicolumn{2}{c|}{\textbf{Amazon-Movie}} & \multicolumn{2}{c}{\textbf{Gowalla}} \bigstrut\\
        \cline{3-10}          &       & \textbf{Recall} & \textbf{NDCG} & \textbf{Recall} & \textbf{NDCG} & \textbf{Recall} & \textbf{NDCG} & \textbf{Recall} & \textbf{NDCG} \bigstrut\\
        \Xhline{1.0pt}
        \multirow{10}[4]{*}{MF \citep{koren2009matrix}} & BPR \citep{rendle2009bpr}   & 0.0665  & 0.0453  & 0.0816  & 0.0527  & 0.0916  & 0.0608  & 0.1355  & 0.1111  \bigstrut[t]\\
        & LLPAUC \citep{shi2024lower} & 0.1150  & 0.0811  & 0.0821  & 0.0499  & 0.1271  & 0.0883  & 0.1610  & 0.1189  \\
        & SL \citep{wu2024effectiveness}    & 0.1559  & 0.1210  & 0.0821  & 0.0529  & 0.1286  & 0.0929  & 0.2064  & 0.1624  \\
        & AdvInfoNCE \citep{zhang2024empowering} & 0.1557  & 0.1172  & 0.0829  & 0.0527  & 0.1293  & 0.0934  & 0.2067  & 0.1627  \\
        & BSL \citep{wu2023bsl}   & 0.1563  & 0.1212  & 0.0834  & 0.0530  & 0.1288  & 0.0931  & 0.2071  & 0.1630  \\
        & PSL-tanh & \cellcolor{lightblue}0.1567  & \cellcolor{lightblue}0.1225  & \cellcolor{lightblue}0.0832  & \cellcolor{lightblue}0.0535  & \cellcolor{lightblue}0.1297  & \cellcolor{lightblue}0.0941  & \cellcolor{lightblue}0.2088  & \cellcolor{lightblue}0.1646  \\
        & PSL-atan & \cellcolor{lightblue}0.1567  & \cellcolor{lightblue}0.1226  & \cellcolor{lightblue}0.0832  & \cellcolor{lightblue}0.0535  & \cellcolor{lightblue}0.1296  & \cellcolor{lightblue}0.0941  & \cellcolor{lightblue}0.2087  & \cellcolor{lightblue}0.1646  \\
        & PSL-relu & \cellcolor{lightblue}\textbf{0.1569} & \cellcolor{lightblue}\textbf{0.1227} & \cellcolor{lightblue}\textbf{0.0838} & \cellcolor{lightblue}\textbf{0.0541} & \cellcolor{lightblue}\textbf{0.1299} & \cellcolor{lightblue}\textbf{0.0945} & \cellcolor{lightblue}\textbf{0.2089} & \cellcolor{lightblue}\textbf{0.1647} \bigstrut[b]\\
        \cline{2-10}          & \textcolor{red}{\textbf{Imp.\%}} & \multicolumn{2}{c|}{\textcolor{red}{\textbf{+1.40\%*}}} & \multicolumn{2}{c|}{\textcolor{red}{\textbf{+2.31\%*}}} & \multicolumn{2}{c|}{\textcolor{red}{\textbf{+1.72\%*}}} & \multicolumn{2}{c}{\textcolor{red}{\textbf{+1.42\%*}}} \bigstrut\\
        \Xhline{1.0pt}
        \multirow{10}[4]{*}{LightGCN \citep{he2020lightgcn}} & BPR \citep{rendle2009bpr}   & 0.0984  & 0.0678  & 0.0813  & 0.0524  & 0.1006  & 0.0681  & 0.1745  & 0.1402  \bigstrut[t]\\
        & LLPAUC \citep{shi2024lower} & 0.1147  & 0.0810  & \textbf{0.0831} & 0.0507  & 0.1272  & 0.0886  & 0.1616  & 0.1192  \\
        & SL \citep{wu2024effectiveness}    & 0.1567  & 0.1220  & 0.0823  & 0.0526  & 0.1304  & 0.0941  & 0.2068  & 0.1628  \\
        & AdvInfoNCE \citep{zhang2024empowering} & 0.1568  & 0.1177  & 0.0823  & 0.0528  & 0.1292  & 0.0936  & 0.2066  & 0.1625  \\
        & BSL \citep{wu2023bsl}   & 0.1568  & 0.1220  & 0.0823  & 0.0526  & \textbf{0.1306} & 0.0943  & 0.2069  & 0.1628  \\
        & PSL-tanh & \cellcolor{lightblue}\textbf{0.1575} & \cellcolor{lightblue}\textbf{0.1233} & \cellcolor{lightblue}0.0825  & \cellcolor{lightblue}0.0532  & 0.1300  & \cellcolor{lightblue}0.0947  & \cellcolor{lightblue}\textbf{0.2091} & \cellcolor{lightblue}\textbf{0.1648} \\
        & PSL-atan & \cellcolor{lightblue}\textbf{0.1575} & \cellcolor{lightblue}\textbf{0.1233} & \cellcolor{lightblue}0.0825  & \cellcolor{lightblue}0.0532  & 0.1300  & \cellcolor{lightblue}0.0948  & \cellcolor{lightblue}\textbf{0.2091} & \cellcolor{lightblue}\textbf{0.1648} \\
        & PSL-relu & \cellcolor{lightblue}\textbf{0.1575} & \cellcolor{lightblue}\textbf{0.1233} & \cellcolor{lightblue}0.0830  & \cellcolor{lightblue}\textbf{0.0536} & 0.1300  & \cellcolor{lightblue}\textbf{0.0953} & \cellcolor{lightblue}0.2086  & \cellcolor{lightblue}\textbf{0.1648} \bigstrut[b]\\
        \cline{2-10}          & \textcolor{red}{\textbf{Imp.\%}} & \multicolumn{2}{c|}{\textcolor{red}{\textbf{+1.12\%*}}} & \multicolumn{2}{c|}{\textcolor{red}{\textbf{+1.98\%*}}} & \multicolumn{2}{c|}{\textcolor{red}{\textbf{+1.22\%*}}} & \multicolumn{2}{c}{\textcolor{red}{\textbf{+1.24\%*}}} \bigstrut\\
        \Xhline{1.0pt}
        \multirow{10}[4]{*}{XSimGCL \citep{yu2023xsimgcl}} & BPR \citep{rendle2009bpr}   & 0.1269  & 0.0905  & 0.0777  & \textbf{0.0508}  & 0.1236  & 0.0857  & 0.1966  & 0.1570  \bigstrut[t]\\
        & LLPAUC \citep{shi2024lower} & 0.1363  & 0.1008  & 0.0781  & 0.0481  & 0.1184  & 0.0828  & 0.1632  & 0.1200  \\
        & SL \citep{wu2024effectiveness}    & 0.1549  & 0.1207  & 0.0772  & 0.0490  & 0.1255  & 0.0905  & 0.2005  & 0.1570  \\
        & AdvInfoNCE \citep{zhang2024empowering} & 0.1568  & 0.1179  & 0.0776  & 0.0489  & 0.1252  & 0.0906  & 0.2010  & 0.1564  \\
        & BSL \citep{wu2023bsl}  & 0.1550  & 0.1207  & 0.0800  & 0.0507 & 0.1267  & 0.0918  & \textbf{0.2037} & \textbf{0.1597} \\
        & PSL-tanh & \cellcolor{lightblue}0.1567  & \cellcolor{lightblue}0.1225  & \cellcolor{lightblue}0.0790  & \cellcolor{lightblue}0.0501  & \cellcolor{lightblue}0.1308  & \cellcolor{lightblue}0.0926  & \cellcolor{lightblue}0.2034  & \cellcolor{lightblue}0.1591  \\
        & PSL-atan & \cellcolor{lightblue}0.1565  & \cellcolor{lightblue}0.1225  & \cellcolor{lightblue}0.0792  & \cellcolor{lightblue}0.0502  & 0.1253  & \cellcolor{lightblue}0.0917  & \cellcolor{lightblue}0.2035  & \cellcolor{lightblue}0.1591  \\
        & PSL-relu & \cellcolor{lightblue}\textbf{0.1571} & \cellcolor{lightblue}\textbf{0.1228} & \cellcolor{lightblue}\textbf{0.0801} & \cellcolor{lightblue}0.0507 & \cellcolor{lightblue}\textbf{0.1313} & \cellcolor{lightblue}\textbf{0.0935} & \cellcolor{lightblue}\textbf{0.2037} & \cellcolor{lightblue}0.1593  \bigstrut[b]\\
        \cline{2-10}          & \textcolor{red}{\textbf{Imp.\%}} & \multicolumn{2}{c|}{\textcolor{red}{\textbf{+1.72\%*}}} & \multicolumn{2}{c|}{\textcolor{red}{\textbf{+3.39\%*}}} & \multicolumn{2}{c|}{\textcolor{red}{\textbf{+3.42\%*}}} & \multicolumn{2}{c}{\textcolor{red}{\textbf{+1.48\%*}}} \bigstrut\\
        \Xhline{1.2pt}
    \end{tabular}
    \label{tab:iid-results}
\end{table}

\subsection{Performance Comparisons} \label{sec:experiments-iid}

\textbf{Results under IID setting.} \cref{tab:iid-results} presents the performance of our PSL compared with baselines.
\begin{itemize}[topsep=0pt,leftmargin=10pt]
    \setlength{\itemsep}{0pt}
    \item \textbf{PSL outperforms SL and other baselines.} Experimental results demonstrate that PSL, with three carefully selected surrogate activations, consistently outperforms SL across all datasets and backbones, with only a few exceptions. For instance, on the MF backbone, compared to the marginal improvements or sometimes even degradation of AdvInfoNCE (-3\%\textasciitilde 0.5\%) and BSL (0.0\%\textasciitilde 0.5\%), PSL shows a significant enhancement over SL (1\%\textasciitilde 3\%). Moreover, our PSL surpasses all compared baselines in most cases, clearly demonstrating its effectiveness.
    \item \textbf{PSL achieves tighter connections with ranking metrics.} We observe that the results align well with our theoretical analyses of PSL's Advantage 1 in \cref{sec:psl}. By replacing the exponential function with other suitable surrogate activations, PSL establishes a tighter relationship with ranking metrics, thus achieving better NDCG performance (\cf \cref{lemma:psl-dcg-surrogate}). This is also empirically evident from the larger improvements in NDCG compared to Recall. In contrast, as discussed in \cref{sec:psl-discussion}, other baselines like AdvInfoNCE and BSL either widen the gap or fail to connect with the ranking metrics, resulting in slight improvements or even performance drops.
\end{itemize}

\begin{table}[t]
    \scriptsize
    \centering
    \caption{Performance comparison in terms of Recall@20 and NDCG@20 under the OOD setting with popularity shift (on MF backbone). The best result is bolded, and the blue-colored zone indicates that PSL is better than SL. Imp.\% denotes the NDCG@20 improvement of PSL over SL. The marker "*" indicates that the improvement is statistically significant ($p$-value $< 0.05$).}
    \begin{tabular}{l|cc|cc|cc|cc}
        \Xhline{1.2pt}
        \multicolumn{1}{c|}{\multirow{2}[4]{*}{\textbf{Loss}}} & \multicolumn{2}{c|}{\textbf{Amazon-CD}} & \multicolumn{2}{c|}{\textbf{Amazon-Electronic}} & \multicolumn{2}{c|}{\textbf{Gowalla}} & \multicolumn{2}{c}{\textbf{Yelp2018}} \bigstrut\\
        \cline{2-9}          & \textbf{Recall} & \textbf{NDCG} & \textbf{Recall} & \textbf{NDCG} & \textbf{Recall} & \textbf{NDCG} & \textbf{Recall} & \textbf{NDCG} \bigstrut\\
        \Xhline{1.0pt}
        BPR \citep{rendle2009bpr}   & 0.0518  & 0.0318  & 0.0132  & 0.0069  & 0.0382  & 0.0273  & 0.0118  & 0.0072  \bigstrut[t]\\
        LLPAUC \citep{shi2024lower} & 0.1103  & 0.0764  & 0.0225  & 0.0134  & 0.0729  & 0.0522  & 0.0324  & 0.0210  \\
        SL \citep{wu2024effectiveness}   & 0.1184  & 0.0815  & 0.0230  & 0.0142  & 0.1006  & 0.0737  & 0.0349  & 0.0224  \\
        AdvInfoNCE \citep{zhang2024empowering} & 0.1189  & 0.0818  & 0.0228  & 0.0139  & 0.0927  & 0.0676  & 0.0348  & 0.0223  \\
        BSL \citep{wu2023bsl}  & 0.1184  & 0.0815  & 0.0231  & 0.0142  & 0.1006  & 0.0738  & 0.0351  & 0.0225  \\
        PSL-tanh & \cellcolor{lightblue}0.1202  & \cellcolor{lightblue}0.0834  & \cellcolor{lightblue}0.0239  & \cellcolor{lightblue}0.0146  & \cellcolor{lightblue}0.1013  & \cellcolor{lightblue}0.0748  & \cellcolor{lightblue}0.0357  & \cellcolor{lightblue}0.0228  \\
        PSL-atan & \cellcolor{lightblue}0.1202  & \cellcolor{lightblue}0.0835  & \cellcolor{lightblue}0.0239  & \cellcolor{lightblue}0.0146  & \cellcolor{lightblue}0.1013  & \cellcolor{lightblue}0.0748  & \cellcolor{lightblue}\textbf{0.0358} & \cellcolor{lightblue}0.0228  \\
        PSL-relu & \cellcolor{lightblue}\textbf{0.1203} & \cellcolor{lightblue}\textbf{0.0839} & \cellcolor{lightblue}\textbf{0.0241} & \cellcolor{lightblue}\textbf{0.0149} & \cellcolor{lightblue}\textbf{0.1014} & \cellcolor{lightblue}\textbf{0.0752} & \cellcolor{lightblue}\textbf{0.0358} & \cellcolor{lightblue}\textbf{0.0229} \bigstrut[b]\\
        \hline
        \textcolor{red}{\textbf{Imp.\%}} & \multicolumn{2}{c|}{\textcolor{red}{\textbf{+3.01\%*}}} & \multicolumn{2}{c|}{\textcolor{red}{\textbf{+5.02\%*}}} & \multicolumn{2}{c|}{\textcolor{red}{\textbf{+2.02\%*}}} & \multicolumn{2}{c}{\textcolor{red}{\textbf{+2.05\%*}}} \bigstrut\\
        \Xhline{1.2pt}
    \end{tabular}
    \label{tab:ood-shift-results}
\end{table}

\begin{figure}[t]
    \centering
    \begin{subfigure}[b]{0.325\textwidth}
        \includegraphics[width=\textwidth]{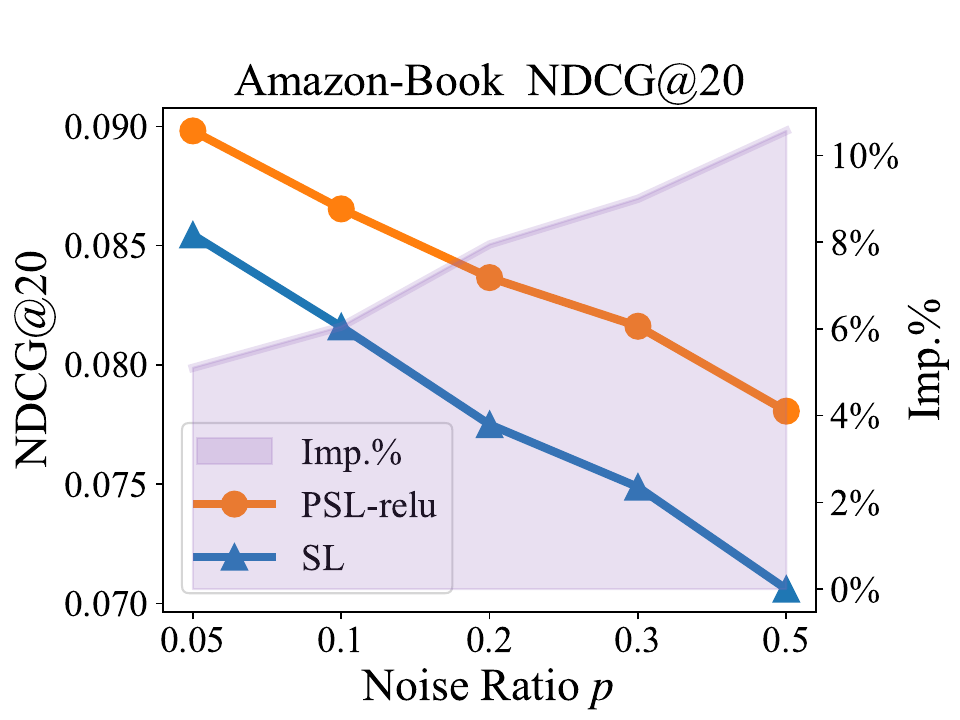}
        \caption{Amazon-Book}
    \end{subfigure}
    \begin{subfigure}[b]{0.325\textwidth}
        \includegraphics[width=\textwidth]{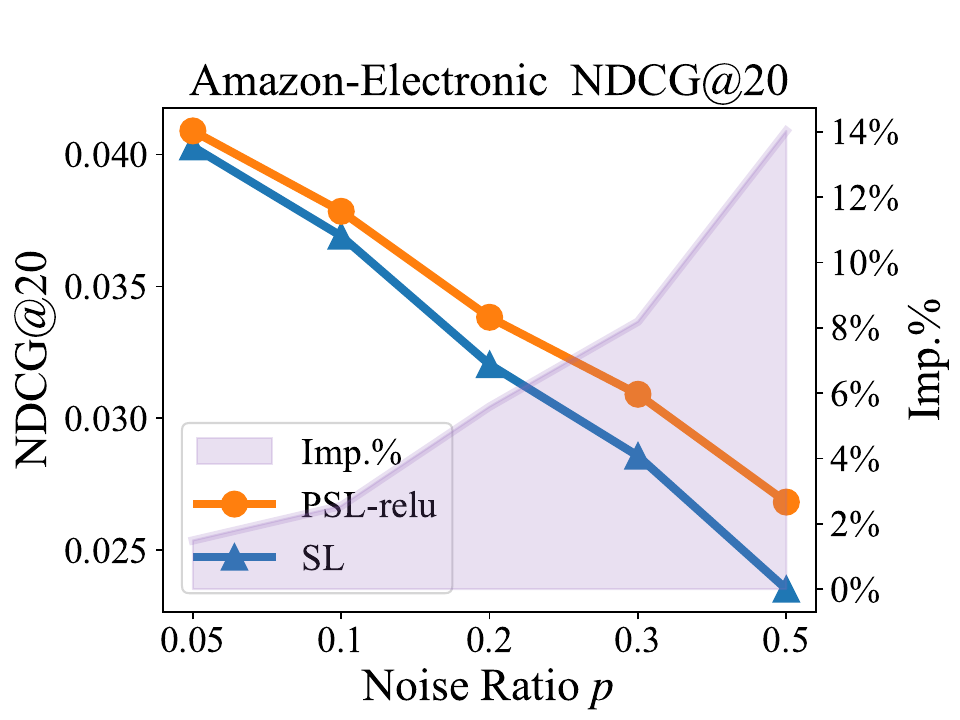}
        \caption{Amazon-Electronic}
    \end{subfigure}
    \begin{subfigure}[b]{0.325\textwidth}
        \includegraphics[width=\textwidth]{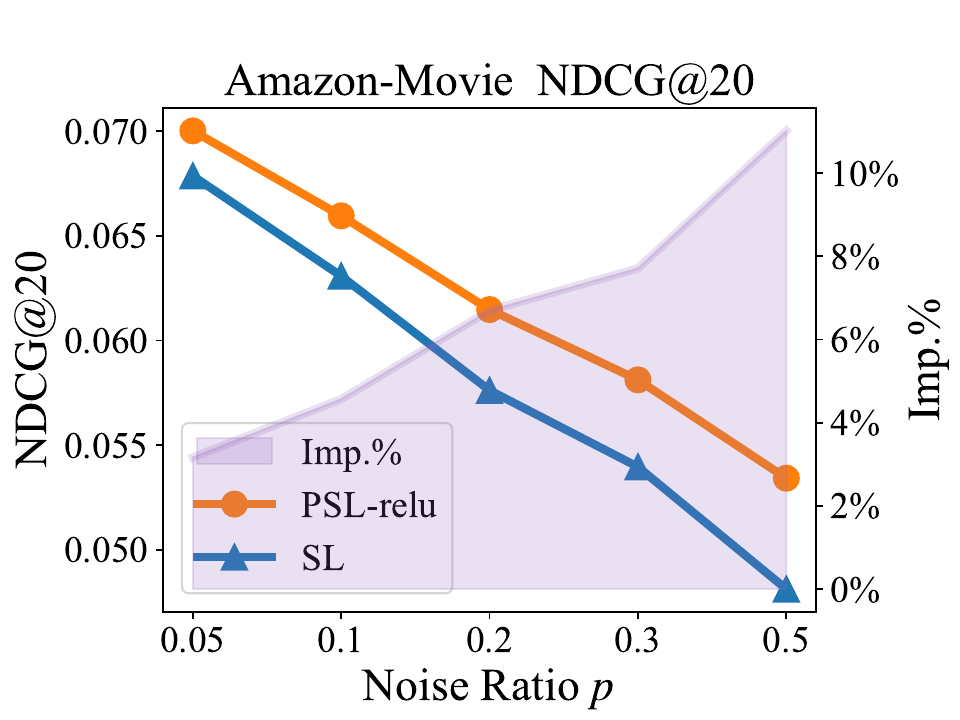}
        \caption{Amazon-Movie}
    \end{subfigure}
    \caption{Performance comparison of SL and PSL in terms of NDCG@20 with different false negative noise ratio (on MF backbone). We also present the relative improvements (\ie Imp.\%) achieved by PSL over SL. The complete results of other baselines are provided in \cref{sec:appendix-experiments-results-ood-noise}.} 
    \label{fig:ood-noise-results}
\end{figure}

\textbf{Results under OOD setting.} \cref{tab:ood-shift-results} presents the results in OOD scenarios with popularity shift. Given the consistent behavior across the three backbones, here we only report the results on MF.
\begin{itemize}[topsep=0pt,leftmargin=10pt]
    \setlength{\itemsep}{0pt}
    \item \textbf{PSL is robust to distribution shifts.} Experimental results indicate that PSL has a strong robustness against distribution shifts, which is consistent with PSL's Advantage 3 in \cref{sec:psl}. As can be seen, PSL not only outperforms all baselines (2\%\textasciitilde 5\%), but also achieves more pronounced improvements than in IID setting, like on Amazon-Electronic (2.31\% $\to$ 5.02\%) and Gowalla (1.42\% $\to$ 2.02\%). This demonstrates the superior robustness of PSL to distribution shifts, as shown in \cref{thm:psl-lee-structure}.
    \item \textbf{PSL is a DRO-enhancement of more reasonable loss.} Although both PSL and SL can be considered as DRO-enhanced losses (\cf \cref{thm:psl-lee-structure}), the original loss of our three PSLs before DRO-enhancement is more reasonable than that of SL, which degenerates from BPR loss to a linear triplet loss \citep{chen2017beyond}. Therefore, we observe significant improvements of PSL over SL.
\end{itemize}

\textbf{Results under Noise setting.}  \cref{fig:ood-noise-results,sec:appendix-experiments-results-ood-noise} presents the results with a certain ratio of imputed false negative noise. Specifically, we regard 10\% of the positive items in the training set as false negative noise and allow the negative sampling procedure to have a certain probability $p$ of sampling those items. We test the model performance with varying noise ratios $p \in \{0.05, 0.1, 0.2, 0.3, 0.5\}$. 
\begin{itemize}[topsep=0pt,leftmargin=10pt]
    \setlength{\itemsep}{0pt}
    \item \textbf{PSL has strong noise resistance.} Experimental results demonstrate that as the noise ratio $p$ increases, both the performance of SL and PSL decline. The performance decline rate of PSL is significantly smaller than that of other baselines, resulting in higher performance enhancement($> 10\%$ when $p=0.5$). These results indicate that PSL possesses stronger noise resistance than SL, which stems from our rational activation design, as discussed in PSL's Advantage 2 in \cref{sec:psl}. However, for DRO-enhanced losses such as AdvInfoNCE, the performance declines similarly to or even more quickly than SL (\cf \cref{sec:appendix-experiments-results-ood-noise}), which coincides with our theoretical analyses.
\end{itemize}

\section{Related Work} \label{sec:related-works}

\textbf{Model-related recommendation research.}
Recent years have witnessed flourishing publications on collaborative filtering (CF) models. The earliest works are mainly extensions of Matrix Factorization \citep{koren2009matrix}, building more complex interactions between embeddings \citep{fiesler2020handbook}, such as MF \citep{koren2009matrix}, LRML \citep{tay2018latent}, SVD \citep{deerwester1990indexing,bell2007modeling}, SVD++ \citep{koren2008factorization}, NCF \citep{he2017ncf}, \etc. In recent years, given the effectiveness of Graph Neural Networks (GNNs) \citep{wu2022graph,gao2022graph,kipf2016semi,wang2019neural,dong2021equivalence,wu2022graph_gcm,chen2024macro} in capturing  high-order relations, which align well with CF assumptions, GNN-based models have emerged and achieved great success, such as LightGCN \citep{he2020lightgcn}, NGCF \citep{wang2019neural}, LCF \citep{yu2020graph}, APDA \citep{zhou2023adaptive}, \etc. Building upon LightGCN, some works attempt to introduce contrastive learning \citep{liu2021self,oord2018representation} for graph data augmentation, such as SGL \citep{wu2021self} and XSimGCL \citep{yu2023xsimgcl}, achieving SOTA performance in recommendation.

\textbf{Loss-related recommendation research.} 
Existing recommendation losses can be primarily categorized into pointwise loss \citep{he2017ncf,he2017nfm}, pairwise loss \citep{rendle2009bpr}, and Softmax Loss (SL) \citep{wu2024effectiveness}, as discussed in \cref{sec:review-softmax}. Given the effectiveness of SL, recently some researchers have proposed to enhance SL from different perspectives. For instance, BSL \citep{wu2023bsl} aims to enhance the positive distribution robustness by leveraging Distributionally Robust Optimization (DRO); AdvInfoNCE \citep{zhang2024empowering} employs adversarial learning to enhance SL's robustness; \citet{zhang2023invariant} suggests incorporating bias-aware margins in SL to tackle popularity bias. Beyond these three types of losses, other approaches have also been explored in recent years. For example, \citet{zhao2021autoloss} introduces auto-loss, which utilizes automated machine learning techniques to search the optimal loss; \citet{shi2024lower} proposes LLPAUC to approximate Recall@$K$ metric. The main concerns with these losses are their lack of theoretical connections to ranking metrics like DCG, which may result in them not consistently outperforming the basic SL. Moreover, both auto-loss and LLPAUC require iterative learning, leading to additional computational time and increased instability.

\section{Conclusion and Limitations} \label{sec:conclusion}

In this work, we introduce a new family of loss functions, termed Pairwise Softmax Loss (PSL). PSL theoretically offers three advantages: 1) it serves as a better surrogate for ranking metrics with appropriate surrogate activations; 2) it allows flexible control over the distribution of the data contribution; 3) it can be interpreted as a specific BPR loss enhanced by Distributionally Robust Optimization (DRO). These properties demonstrate that PSL has greater effectiveness and robustness compared to Softmax Loss. Our extensive experiments across three testing scenarios validate the superiority of PSL over existing methods.

One limitation of both PSL and SL is inefficiency, as they require sampling a relatively large number of negative instances per iteration. How to address this issue and improve the efficiency of these losses is an interesting direction for future research.

\begin{ack}


This work is supported by the Zhejiang Province "JianBingLingYan+X" Research and Development Plan (2024C01114).

\end{ack}


\bibliographystyle{unsrtnat}
{\small
\bibliography{references}
}


\clearpage

\setcounter{figure}{0}
\setcounter{table}{0}
\renewcommand{\thefigure}{\thesection.\arabic{figure}}
\renewcommand{\thetable}{\thesection.\arabic{table}}

\appendix

\section{Theoretical Proofs} \label{sec:appendix-proof}

\subsection{Proof of \texorpdfstring{\cref{lemma:psl-dcg-surrogate}}{Lemma (PSL as DCG Surrogate)}} \label{sec:appendix-proof-psl-dcg-surrogate}

\begin{lemma}[\cref{lemma:psl-dcg-surrogate}]
If the condition
\begin{equation*} \tag{\ref{eq:psl-activation-condition}}
    \delta(d_{uij}) \leq \sigma(d_{uij}) \leq \exp(d_{uij})
\end{equation*}
is satisfied for any $d_{uij} \in [-1, 1]$, then PSL serves as a tighter DCG surrogate loss compared to SL.
\end{lemma}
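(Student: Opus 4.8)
The plan is to sandwich $\mathcal{L}_{\textnormal{PSL}}(u)$ between the DCG surrogate bound of \cref{eq:softmax-dcg-surrogate-bound} and $\mathcal{L}_{\textnormal{SL}}(u)$, thereby exhibiting it as a valid but smaller (hence tighter) upper bound for $-\log\mathrm{DCG}(u)$. The whole argument reduces to lifting the hypothesis \eqref{eq:psl-activation-condition} through the four operations that assemble the loss from the raw activation values: raising to the power $1/\tau$, summing over negatives, applying $\log$, and averaging over positives.

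First I would convert the hypothesis into a statement about $\sigma(d_{uij})^{1/\tau}$. Since $\tau > 0$ gives $1/\tau > 0$, and all three functions are nonnegative on $[-1,1]$ (indeed $\sigma(d) \geq \delta(d) \geq 0$), the map $t \mapsto t^{1/\tau}$ is monotone increasing on $[0,\infty)$, so \eqref{eq:psl-activation-condition} yields $\delta(d_{uij})^{1/\tau} \leq \sigma(d_{uij})^{1/\tau} \leq \exp(d_{uij})^{1/\tau} = \exp(d_{uij}/\tau)$. Because the Heaviside step function takes values in $\{0,1\}$, we have $\delta^{1/\tau} = \delta$, producing the pointwise chain $\delta(d_{uij}) \leq \sigma(d_{uij})^{1/\tau} \leq \exp(d_{uij}/\tau)$.

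Next I would propagate this chain upward. Summing over $j \in \mathcal{I}$ and recalling the identity $\pi_u(i) = \sum_{j} \delta(d_{uij})$ from \cref{eq:softmax-dcg-pi} gives $\pi_u(i) \leq \sum_j \sigma(d_{uij})^{1/\tau} \leq \sum_j \exp(d_{uij}/\tau)$; applying the monotone $\log$, summing over $i \in \mathcal{P}_u$, and dividing by the positive constant $|\mathcal{P}_u|$ produces
\begin{equation*}
    -\log\mathrm{DCG}(u) + \log|\mathcal{P}_u| \leq \frac{1}{|\mathcal{P}_u|}\sum_{i \in \mathcal{P}_u}\log\pi_u(i) \leq \frac{1}{|\mathcal{P}_u|}\mathcal{L}_{\textnormal{PSL}}(u) \leq \frac{1}{|\mathcal{P}_u|}\mathcal{L}_{\textnormal{SL}}(u),
\end{equation*}
where the leftmost inequality is exactly \cref{eq:softmax-dcg-surrogate-bound}. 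This places $\mathcal{L}_{\textnormal{PSL}}(u)$ above the DCG bound yet below $\mathcal{L}_{\textnormal{SL}}(u)$, up to the common per-user factor $1/|\mathcal{P}_u|$ that is irrelevant for optimization, which is precisely the claimed tightness.

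The one point that needs care --- and the main obstacle --- is that \eqref{eq:psl-activation-condition} is only assumed on $d_{uij} \in [-1,1]$, so I must verify that no score gap falls outside this interval. This is guaranteed by the scoring design: $f(u,i)$ is a cosine similarity scaled by $1/2$, hence $f(u,i) \in [-1/2, 1/2]$ and $d_{uij} = f(u,j) - f(u,i) \in [-1,1]$ for every pair. This is exactly why the $1/2$ factor was introduced in the preliminaries, and it lets the restricted hypothesis suffice for all terms in the sums; the rest of the argument is the routine monotonicity bookkeeping above.
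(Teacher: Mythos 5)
Your proof is correct and follows essentially the same route as the paper's: lift the hypothesis \eqref{eq:psl-activation-condition} through the power $1/\tau$ (using $\delta^{1/\tau}=\delta$ and monotonicity) and then chain it into \cref{eq:softmax-dcg-surrogate-bound,eq:softmax-dcg-pi} to sandwich $\mathcal{L}_{\textnormal{PSL}}$ between the DCG bound and $\mathcal{L}_{\textnormal{SL}}$. You merely make explicit two details the paper leaves implicit --- the summation/log/averaging bookkeeping and the verification that the $\tfrac{1}{2}$-scaled cosine scores force $d_{uij}\in[-1,1]$ --- which is a welcome clarification rather than a different argument.
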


\begin{proof}[Proof of \cref{lemma:psl-dcg-surrogate}]
For any $\tau > 0$, \cref{eq:psl-activation-condition} indicates that
\begin{equation} \label{eq:psl-activation-condition-proof-2}
    \delta(d_{uij}) \leq \sigma(d_{uij})^{1/\tau} \leq \exp(d_{uij})^{1/\tau}
\end{equation}
which means $\sigma(\cdot)^{1/\tau}$ is tighter than $\exp(\cdot)^{1/\tau}$ approximating $\delta(\cdot)$. According to \cref{eq:softmax-dcg-surrogate-bound,eq:softmax-dcg-pi} in \cref{sec:analysis-softmax}, we conclude that PSL is a tighter surrogate loss for DCG compared to SL.
\end{proof}

\subsection{Proof of \texorpdfstring{\cref{thm:psl-lee-structure}}{Theorem (PSL DRO under KL divergence)}} \label{sec:appendix-proof-dro-lee-structure}

\begin{theorem}[\cref{thm:psl-lee-structure}]
For each user $u$ and its positive item $i$, let $P = P(j | u, i)$ be the uniform distribution over $\mathcal{I}$. Given a robustness radius $\eta > 0$, consider the uncertainty set $\mathcal{Q}$ consisting of all perturbed distributions $Q = Q(j | u, i)$ satisfying: (i) $Q$ is absolutely continuous \wrt $P$, \ie $Q \ll P$; (ii) the KL divergence between $Q$ and $P$ is constrained by $\eta$, \ie $D_{\textnormal{KL}}(Q \| P) \leq \eta$. Then, optimizing PSL is equivalent to performing DRO over BPR loss, \ie
\begin{equation*} \tag{\ref{eq:psl-dro-equivalent}}
    \min\underbrace{ \left\{ \mathbb{E}_{i \sim \mathcal{P}_u} \left[ \log \mathbb{E}_{j \sim \mathcal{I}} \left[e^{ \log(\sigma(d_{uij})) / \tau }\right] \vphantom{\sup_{Q \in \mathcal{Q}}} \right] \right\} }_{\mathcal{L}_{\textnormal{PSL}}(u)}
    \Leftrightarrow
    \min \underbrace{ \left\{ \mathbb{E}_{i \sim \mathcal{P}_u} \left[ \sup_{Q \in \mathcal{Q}} \mathbb{E}_{j \sim Q(j | u, i)} \left[ \log\sigma(d_{uij}) \right] \right] \right\} }_{\mathcal{L}_{\textnormal{BPR-DRO}}(u)}
\end{equation*}
where $\tau = \tau(\eta)$ is a temperature parameter controlled by $\eta$.
\end{theorem}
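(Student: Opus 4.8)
The plan is to reduce the claimed equivalence to the standard Lagrangian duality for Kullback--Leibler--constrained distributionally robust optimization, applied pointwise in the positive item $i$. First I would fix a user $u$ and a positive item $i$ and abbreviate the per-pair BPR integrand as $\ell(j) := \log \sigma(d_{uij})$, so that the right-hand side of \cref{eq:psl-dro-equivalent} has inner object $\sup_{Q \in \mathcal{Q}} \mathbb{E}_{j \sim Q}[\ell(j)]$ with $\mathcal{Q} = \{Q \ll P : D_{\textnormal{KL}}(Q \| P) \le \eta\}$ and $P$ uniform on $\mathcal{I}$. The whole theorem then follows by taking $\mathbb{E}_{i \sim \mathcal{P}_u}$ of the resulting pointwise identity.

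The key step is the KL--DRO duality. Writing the likelihood ratio $q = dQ/dP$, the constrained supremum becomes $\max_{q \ge 0,\, \mathbb{E}_P[q]=1,\, \mathbb{E}_P[q\log q]\le \eta} \mathbb{E}_P[q\,\ell]$. I would introduce a multiplier $\tau \ge 0$ for the KL constraint and $\mu$ for the normalization constraint, form the Lagrangian, and solve the inner (unconstrained) maximization over $q$ via a first-order condition. This yields the exponentially tilted (Gibbs) density $q^\ast(j) \propto e^{\ell(j)/\tau}$, i.e. $q^\ast = e^{\ell/\tau} / \mathbb{E}_P[e^{\ell/\tau}]$. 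Substituting $q^\ast$ back and simplifying (the two $\mathbb{E}_P[q^\ast \ell]$ contributions cancel) collapses the dual objective to $\tau \eta + \tau \log \mathbb{E}_P[e^{\ell/\tau}]$, so that by strong duality
\[
    \sup_{Q \in \mathcal{Q}} \mathbb{E}_{j \sim Q}[\ell(j)]
    = \inf_{\tau > 0}\Bigl\{ \tau \eta + \tau \log \mathbb{E}_{j \sim P}\bigl[e^{\ell(j)/\tau}\bigr] \Bigr\}.
\]
Strong duality is licensed because the primal is a concave maximization over the convex set $\mathcal{Q}$ and Slater's condition holds for any $\eta > 0$ (the feasible set has nonempty relative interior, e.g. $Q = P$ gives $D_{\textnormal{KL}} = 0 < \eta$).

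To finish, I would denote by $\tau^\ast = \tau^\ast(\eta)$ the minimizer of the dual, at which the infimum is attained with equality, and observe that $\tau \log \mathbb{E}_P[e^{\ell/\tau}]$ is exactly $\tau$ times the inner term of $\mathcal{L}_{\textnormal{PSL}}(u)$ (recall $e^{\log(\sigma(d_{uij}))/\tau} = \sigma(d_{uij})^{1/\tau}$). Since $\tau^\ast > 0$ is constant in the model parameters and $\tau^\ast \eta$ is an additive constant, neither affects the argmin over the scoring function; hence minimizing $\mathcal{L}_{\textnormal{BPR-DRO}}(u)$ and minimizing $\mathcal{L}_{\textnormal{PSL}}(u)$ define the same optimization problem, which is the claimed equivalence, with the map $\tau = \tau(\eta)$ furnished by the dual minimizer.

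I expect the main obstacle to be conceptual rather than computational: the dual-optimal $\tau^\ast$ generically depends on the positive item $i$ (through $\ell$), whereas PSL employs a single shared temperature. I would address this by treating $\tau$ as a hyperparameter, i.e. reading \cref{eq:psl-dro-equivalent} as the statement that fixing $\tau$ and minimizing over the model is one coordinate slice of the joint $(\theta,\tau)$ minimization $\min_{\theta,\tau}\{\tau\eta + \tau\, \mathbb{E}_i \log \mathbb{E}_P[e^{\ell/\tau}]\}$ obtained from the duality, so that the correspondence $\tau \mapsto \eta$ identifies which robustness radius a given temperature implicitly selects. A secondary technical point is checking the regularity needed for the tilted distribution to be well defined and for strong duality to apply (finiteness of the log-partition $\log \mathbb{E}_P[e^{\ell/\tau}]$ and integrability), which holds here because $\mathcal{I}$ is finite and $\sigma(d_{uij}) > 0$ on the relevant range.
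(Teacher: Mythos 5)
Your proposal is correct and follows essentially the same route as the paper: the paper likewise reduces the theorem to the KL-constrained DRO duality $\sup_{Q \in \mathcal{Q}} \mathbb{E}_{Q}[\ell] = \inf_{\tau > 0}\left\{ \tau \log \mathbb{E}_{P}\left[e^{\ell/\tau}\right] + \tau \eta \right\}$, proved by forming the Lagrangian over the likelihood ratio and invoking strong duality via Slater's condition, the only cosmetic difference being that the paper obtains this by specializing a general $\phi$-divergence DRO theorem (computing the Fenchel conjugate of $\tau x \log x$) while you solve the KL case directly through the first-order condition and the Gibbs tilting $q^{\ast} \propto e^{\ell/\tau}$. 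Your explicit handling of the fact that the dual-optimal $\tau^{\ast}$ depends on the positive item $i$ --- reading the equivalence as a fixed-$\tau$ coordinate slice of the joint $(\theta, \tau)$ minimization --- is in fact more careful than the paper, which simply declares $\tau = \tau(\eta)$ to be the optimal Lagrange multiplier without addressing this point.
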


To prove \cref{thm:psl-lee-structure}, it suffices to prove the following lemma:

\begin{lemma}[DRO under KL divergence] \label{lemma:dro-lee-structure}
Given the loss term $\ell(x; \theta)$ of input $x$ and parameters $\theta$, for any robustness radius $\eta > 0$, DRO under KL divergence is equivalent to optimizing a loss in the form of $\log \mathbb{E} [\exp(\cdot)]$, \ie
\begin{equation} \label{eq:dro-lee-structure}
    \min_{\theta} \sup_{Q \in \mathcal{Q}} \mathbb{E}_{x \sim Q} [\ell(x; \theta)]
    \Leftrightarrow
    \min_{\theta, \tau > 0} \left\{ \tau \log \mathbb{E}_{x \sim P} [\exp(\ell(x; \theta) / \tau)] + \tau \eta \right\}
\end{equation}
where the uncertainty set $\mathcal{Q}$ consists of all perturbed distributions $Q$ constrained by KL divergence \wrt the original distribution $P$, \ie $\mathcal{Q} = \{Q \ll P : D_{\textnormal{KL}}(Q \| P) \leq \eta\}$.
\end{lemma}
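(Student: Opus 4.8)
The plan is to prove the general duality \cref{lemma:dro-lee-structure}, from which \cref{thm:psl-lee-structure} is immediate: taking $x=j$ the negative item, $P$ the uniform distribution over $\mathcal{I}$, and $\ell(x;\theta)=\log\sigma(d_{uij})$, the right-hand side of \cref{eq:dro-lee-structure} becomes $\tau\log\mathbb{E}_{j\sim\mathcal{I}}[\sigma(d_{uij})^{1/\tau}]+\tau\eta$. Evaluated at the optimal multiplier $\tau=\tau(\eta)$ (the one that activates the KL constraint), the per-$i$ inner identity reads $\sup_{Q\in\mathcal{Q}}\mathbb{E}_{j\sim Q}[\log\sigma(d_{uij})]=\tau\log\mathbb{E}_{j\sim\mathcal{I}}[\sigma(d_{uij})^{1/\tau}]+\tau\eta$; dividing by the positive constant $\tau$ and discarding the additive constant $\eta$ leaves exactly the summand of $\mathcal{L}_{\textnormal{PSL}}(u)$, so minimizing PSL over $\theta$ at this $\tau$ coincides with minimizing the per-$i$ BPR-DRO objective. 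Applying the outer $\mathbb{E}_{i\sim\mathcal{P}_u}$ then yields \cref{eq:psl-dro-equivalent}. Hence everything reduces to the $\theta$-fixed inner dual identity $\sup_{Q:D_{\textnormal{KL}}(Q\|P)\leq\eta}\mathbb{E}_Q[\ell]=\inf_{\tau>0}\{\tau\log\mathbb{E}_P[\exp(\ell/\tau)]+\tau\eta\}$.

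First I would fix $\theta$ and reparametrize each admissible $Q$ by its Radon--Nikodym derivative $L=\diff Q/\diff P\geq 0$, so that $\mathbb{E}_{x\sim Q}[\ell]=\mathbb{E}_{x\sim P}[L\ell]$, the normalization is $\mathbb{E}_P[L]=1$, and $D_{\textnormal{KL}}(Q\|P)=\mathbb{E}_P[L\log L]$. This recasts the inner problem as a convex program in $L$ --- a linear objective over a convex feasible set, since $t\mapsto t\log t$ is convex.

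The cleanest route is the Gibbs/Donsker--Varadhan variational principle. I would invoke the identity $\log\mathbb{E}_P[\exp(g)]=\sup_{Q\ll P}\{\mathbb{E}_Q[g]-D_{\textnormal{KL}}(Q\|P)\}$, valid whenever $\exp(g)$ is $P$-integrable; with $g=\ell/\tau$ and $\mathbb{E}_P[\exp(g)]=\mathbb{E}_P[\sigma^{1/\tau}]$ this integrability is automatic in the PSL application, since $P$ is uniform over the finite set $\mathcal{I}$ (and $\sigma$ is in any case bounded on the admissible range $d\in[-1,1]$). Scaling by $\tau>0$ and adding $\tau\eta$ gives, for each fixed $\tau$,
\[
\tau\log\mathbb{E}_P[\exp(\ell/\tau)]+\tau\eta=\sup_{Q\ll P}\left\{\mathbb{E}_Q[\ell]-\tau\bigl(D_{\textnormal{KL}}(Q\|P)-\eta\bigr)\right\}.
\]
I would then take $\inf_{\tau>0}$ on both sides and exchange it with $\sup_Q$. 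After the swap, the inner $\inf_{\tau>0}$ of $-\tau(D_{\textnormal{KL}}(Q\|P)-\eta)$ equals $0$ when $D_{\textnormal{KL}}(Q\|P)\leq\eta$ (attained as $\tau\to0^+$) and $-\infty$ otherwise (let $\tau\to\infty$), acting as a hard feasibility indicator that collapses the right-hand side to $\sup_{Q:D_{\textnormal{KL}}(Q\|P)\leq\eta}\mathbb{E}_Q[\ell]$, the DRO objective.

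The hard part will be justifying the interchange of $\inf_\tau$ and $\sup_Q$. I would appeal to Sion's minimax theorem: the map $(Q,\tau)\mapsto\mathbb{E}_Q[\ell]-\tau(D_{\textnormal{KL}}(Q\|P)-\eta)$ is concave (in fact affine) in the likelihood ratio $L$ and convex (affine) in $\tau$, with $Q$ ranging over the convex, weakly compact set $\{Q\ll P:D_{\textnormal{KL}}(Q\|P)\leq\eta\}$ on which the objective is upper semicontinuous, so the minimax hypotheses hold. The remaining technical points are the integrability of $\exp(\ell/\tau)$ (trivial for finite $\mathcal{I}$, and handled by boundedness of $\sigma$ in general, with the case $\sigma=0$ harmless since then $\sigma^{1/\tau}=0$) and the attainment of the optimal $\tau=\tau(\eta)$ rendering the KL constraint active, which is precisely what identifies the temperature in \cref{thm:psl-lee-structure} with the Lagrange multiplier of the KL ball.
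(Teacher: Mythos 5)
You take a genuinely different route from the paper. The paper proves the general $\phi$-divergence duality (\cref{thm:dro-phi-divergence}, after \citet{shapiro2017distributionally}) by full Lagrangian duality --- likelihood-ratio reformulation, two multipliers $\tau$ and $\mu$ for the divergence and normalization constraints, Slater's condition for strong duality, an interchange of supremum and integration, and Fenchel conjugation --- and then specializes to $\phi(t) = t\log t$, computes $(\tau\phi)^*(y) = \tau e^{y/\tau - 1}$, and eliminates $\mu$ in closed form. You instead invoke Donsker--Varadhan (which already encodes both the normalization constraint and the KL Fenchel conjugate) and dualize only the radius constraint, reducing the whole lemma to a single $\inf$-$\sup$ interchange. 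That strategy is legitimate and leaner, but your justification of the interchange contains a genuine inconsistency.

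The problem: the DV identity $\tau \log \mathbb{E}_P[e^{\ell/\tau}] + \tau\eta = \sup_{Q \ll P}\{\mathbb{E}_Q[\ell] - \tau(D_{\textnormal{KL}}(Q\|P) - \eta)\}$ holds only when the supremum ranges over \emph{all} $Q \ll P$; its maximizer is the Gibbs measure $\mathrm{d}Q_\tau \propto e^{\ell/\tau}\,\mathrm{d}P$, which need not lie in the KL ball. So the interchange you need is $\inf_{\tau}\sup_{Q \ll P} = \sup_{Q \ll P}\inf_{\tau}$, with $Q$ ranging over all of $\{Q \ll P\}$ --- yet you verify Sion's hypotheses (convexity, weak compactness, semicontinuity) on the KL ball $\mathcal{Q}$, a strictly smaller set. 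The mismatch cannot be fixed by simply restricting the supremum to $\mathcal{Q}$ throughout: on $\mathcal{Q}$ the DV identity degrades to an inequality, and your chain then yields only $\sup_{Q \in \mathcal{Q}}\mathbb{E}_Q[\ell] \leq \inf_{\tau > 0}\{\tau\log\mathbb{E}_P[e^{\ell/\tau}] + \tau\eta\}$, which is weak duality (already immediate from DV) and not the reverse inequality that the lemma actually asserts. The repair, in the setting the paper needs ($P$ uniform over the finite set $\mathcal{I}$), is to apply Sion with the compact set taken to be the whole simplex $\{Q \ll P\}$: there the objective is continuous and concave in $Q$ (concave, not affine --- the term $-\tau\mathbb{E}_P[L\log L]$ is strictly concave in $L$, a second small slip; similarly, the inner infimum $0$ is approached but not attained as $\tau \to 0^+$) and affine in $\tau$, so the swap is valid and your argument closes. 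At the lemma's stated level of generality, however, $\{Q \ll P\}$ is not weakly compact, Sion does not apply directly, and one needs either the paper's Slater/Fenchel-conjugate route or an explicit constraint-activation argument showing that the dual-optimal Gibbs measure is primal-feasible, as in the original proof of \citet{hu2013kullback}.
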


\cref{lemma:dro-lee-structure}, which was first proposed by \citet{hu2013kullback} with a complex proof, gives a closed-form solution for DRO under KL divergence. Here we provide an elegant proof based on the following general result about the $\phi$-divergence DRO, which was first proposed by \citet{shapiro2017distributionally}.

\begin{theorem}[DRO under $\phi$-divergence, \citep{shapiro2017distributionally}] \label{thm:dro-phi-divergence}
Consider the DRO problem in $\phi$-divergence
\begin{equation} \label{eq:phi-divergence}
    D_{\phi}(Q \| P) = \int \phi\left(\frac{\mathrm{d}Q}{\mathrm{d}P}\right) \mathrm{d}P
\end{equation}
where $\phi: \mathbb{R} \to \overline{\mathbb{R}}_+ = \mathbb{R}_+ \cup \{\infty\}$ is a convex function such that $\phi(1) = 0$ and $\phi(t) = +\infty$ for any $t < 0$. Then the inner maximization problem in DRO, \ie $\sup_{Q \in \mathcal{Q}} \mathbb{E}_{x \sim Q} [\ell(x; \theta)]$ with the uncertainty set $\mathcal{Q} = \{Q \ll P : D_{\phi}(Q \| P) \leq \eta\}$, is equivalent to the following optimization problem:
\begin{equation} \label{eq:dro-phi-divergence}
    \inf_{\tau > 0, \mu} \left\{ \mathbb{E}_{x \sim P}\left[(\tau \phi)^*(\ell(x; \theta) - \mu)\right] + \tau \eta + \mu \right\}
\end{equation}
where $f^*(y) = \sup_{x} \left\{yx - f(x)\right\}$ is the Fenchel conjugate \citep{rockafellar2009variational} for any convex function $f: \mathbb{R} \to \overline{\mathbb{R}}$.
\end{theorem}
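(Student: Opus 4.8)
The plan is to establish the duality by Lagrangian relaxation of the inner maximization, after rewriting it in terms of the likelihood ratio $q = \mathrm{d}Q/\mathrm{d}P$. First I would use $Q \ll P$ to parametrize every admissible $Q$ by its density $q \geq 0$ satisfying $\mathbb{E}_{x\sim P}[q] = 1$, under which $D_{\phi}(Q\|P) = \mathbb{E}_{x\sim P}[\phi(q)]$; here the convention $\phi(t) = +\infty$ for $t < 0$ silently enforces $q \geq 0$, so it never needs to be listed as a separate constraint. The inner problem then becomes the convex program $\sup_q\{\mathbb{E}_{x\sim P}[\ell q] : \mathbb{E}_{x\sim P}[\phi(q)] \leq \eta,\ \mathbb{E}_{x\sim P}[q] = 1\}$, which is linear in $q$ over a convex feasible set.

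Next I would introduce a multiplier $\tau \geq 0$ for the divergence constraint and $\mu \in \mathbb{R}$ for the normalization, forming the Lagrangian
\[
    \mathcal{L}(q, \tau, \mu) = \mathbb{E}_{x\sim P}\big[(\ell(x;\theta) - \mu)\, q(x) - \tau\phi(q(x))\big] + \tau\eta + \mu .
\]
The crux is to pass from $\sup_q \mathcal{L}$ to a pointwise maximization: interchanging the supremum with the expectation yields
\[
    \sup_q \mathcal{L} = \mathbb{E}_{x\sim P}\Big[\sup_{t}\{(\ell(x;\theta) - \mu)\, t - \tau\phi(t)\}\Big] + \tau\eta + \mu = \mathbb{E}_{x\sim P}\big[(\tau\phi)^*(\ell(x;\theta) - \mu)\big] + \tau\eta + \mu ,
\]
where the inner supremum is exactly the Fenchel conjugate $(\tau\phi)^*$ evaluated at $\ell(x;\theta) - \mu$. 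Taking the infimum over $(\tau,\mu)$ then reproduces the claimed dual objective, provided strong duality holds so that $\sup_q \inf_{\tau,\mu}\mathcal{L} = \inf_{\tau,\mu}\sup_q \mathcal{L}$.

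Strong duality I would justify by Slater's condition: since $\phi(1) = 0 < \eta$, the uniform density $q \equiv 1$ (i.e.\ $Q = P$) is strictly feasible, which is the constraint qualification making the duality gap vanish for this convex program. The restriction of the outer infimum to $\tau > 0$ rather than $\tau \geq 0$ is a boundary remark: at $\tau = 0$ the conjugate $(0\cdot\phi)^*$ degenerates to the support function of $[0,\infty)$, forcing $\mu \geq \operatorname{esssup}\ell$ and recovering the unconstrained worst case; because $\eta > 0$ admits a strictly feasible point, the optimal multiplier is strictly positive and the $\tau = 0$ branch never attains the infimum.

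The main obstacle is rigorously justifying the interchange of supremum and integral in the Lagrangian step. This is not a routine Fubini argument but the interchangeability principle for integral functionals of normal integrands (Rockafellar--Wets), which applies because the feasible densities range over a decomposable space of measurable functions, so a pointwise maximizer $t^*(x) \in \arg\max_t\{(\ell(x;\theta) - \mu)\, t - \tau\phi(t)\}$ can be selected measurably and assembled into an admissible $q^*$. I would invoke this principle, together with mild integrability of $\ell$ and lower semicontinuity and convexity of $\phi$, to make the pointwise reduction exact; the Fenchel-conjugate identity and the remaining rearrangements are then bookkeeping.
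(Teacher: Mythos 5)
Your proposal is correct and follows essentially the same route as the paper's proof: reformulating the inner problem via the likelihood ratio $q = \mathrm{d}Q/\mathrm{d}P$, forming the Lagrangian with multipliers $\tau \geq 0$ and $\mu$, invoking Slater's condition at $q \equiv 1$ for strong duality, and applying the Rockafellar--Wets interchangeability principle to reduce $\sup_q \mathcal{L}$ to the pointwise Fenchel conjugate $(\tau\phi)^*(\ell(x;\theta)-\mu)$. Your boundary discussion of the $\tau = 0$ case is in fact slightly more detailed than the paper's one-line remark that restricting to $\tau > 0$ does not affect the optimal value.
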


\begin{proof}[Proof of \cref{thm:dro-phi-divergence}.]
Let the likelihood ratio $L(x) = \mathrm{d}Q(x) / \mathrm{d}P(x)$, then the inner maximization problem in DRO can be reformulated as
\begin{equation} \label{eq:dro-likelihood-ratio}
    \sup_{L \succeq 0} \left\{ \mathbb{E}_{x \sim P} \left[L(x) \ell(x; \theta)\right] \mid \mathbb{E}_{x \sim P}[\phi(L(x))] \leq \eta, \mathbb{E}_{x \sim P}[L(x)] = 1 \right\}
\end{equation}
The Lagrangian of \cref{eq:dro-likelihood-ratio} is
\begin{equation} \label{eq:dro-lagrangian}
    \mathcal{L}(L, \tau, \mu) = \mathbb{E}_{x \sim P} \left[ L(x) \ell(x; \theta) - \tau \phi(L(x)) - \mu L(x) \right] + \tau \eta + \mu
\end{equation}
where $\tau \geq 0$ and $\mu$ are the Lagrange multipliers. Problem \eqref{eq:dro-likelihood-ratio} is a convex optimization problem. One can easily check the Slater's condition \citep{boyd2004convex} by choosing $L(x) \equiv 1$, thus the strong duality \citep{boyd2004convex} holds, and problem \eqref{eq:dro-likelihood-ratio} is equivalent to the dual problem \eqref{eq:dro-likelihood-ratio-dual} of the Lagrangian \eqref{eq:dro-lagrangian}:
\begin{equation} \label{eq:dro-likelihood-ratio-dual}
    \inf_{\tau \geq 0, \mu} \sup_{L \succeq 0} \mathcal{L}(L, \tau, \mu)
\end{equation}
Consider the inner maximization problem $\sup_{L \succeq 0} \mathcal{L}(L, \tau, \mu)$ in \cref{eq:dro-likelihood-ratio-dual}, $\tau \eta + \mu$ is a constant and can be ignored. By the theorem of interchange of minimization and integration \citep{rockafellar2009variational}, we can interchange $\sup$ and expectation in \cref{eq:dro-likelihood-ratio-dual}.Then $\sup_{L \succeq 0} \mathcal{L}(L, \tau, \mu)$ can be reformulated as
\begin{equation} \label{eq:dro-likelihood-ratio-dual-solve}
    \mathbb{E}_{x \sim P} \left[ \sup_{L \succeq 0} \left\{ L(x) (\ell(x; \theta) - \mu) - \tau \phi(L(x)) \right\}\right]
\end{equation}
The above problem can be rewritten by the Fenchel conjugate as
\begin{equation} \label{eq:dro-likelihood-ratio-dual-solve-fenchel}
    \mathbb{E}_{x \sim P} \left[ (\tau \phi)^*(\ell(x; \theta) - \mu) \right]
\end{equation}
Thus, problem \eqref{eq:dro-likelihood-ratio-dual} is equivalent to
\begin{equation} \label{eq:dro-likelihood-ratio-dual-final}
    \inf_{\tau \geq 0, \mu} \left\{ \mathbb{E}_{x \sim P}\left[(\tau \phi)^*(\ell(x; \theta) - \mu)\right] + \tau \eta + \mu \right\}
\end{equation}
Finally, note that the condition $\tau \geq 0$ in problem \eqref{eq:dro-likelihood-ratio-dual-final} can be relaxed to $\tau > 0$ without affecting the optimal value, thus problem \eqref{eq:dro-likelihood-ratio-dual-final} is equivalent to problem \eqref{eq:dro-phi-divergence}, which completes the proof.
\end{proof}

\cref{lemma:dro-lee-structure} can be directly derived from \cref{thm:dro-phi-divergence} as follows:

\begin{proof}[Proof of \cref{lemma:dro-lee-structure}.]
KL divergence is a special case of $\phi$-divergence with $\phi(x) = x \log x$, and the Fenchel conjugate of $\tau \phi$ is
\begin{equation} \label{eq:kl-fenchel-conjugate}
    (\tau \phi)^*(y) = \sup_{x} \left\{yx - \tau x \log x\right\} = \tau e^{y / \tau - 1}
\end{equation}
By \cref{thm:dro-phi-divergence}, the DRO problem under KL divergence is equivalent to
\begin{equation} \label{eq:dro-KL-divergence}
\begin{aligned}
    &\inf_{\tau > 0, \mu} \left\{ \mathbb{E}_{x \sim P}\left[\tau e^{(\ell(x; \theta) - \mu) / \tau - 1}\right] + \tau \eta + \mu \right\} \\
    = &\inf_{\tau > 0, \mu} \left\{ \mathbb{E}_{x \sim P}\left[e^{\ell(x; \theta) / \tau} \right] \tau e^{-\mu / \tau - 1} + \tau \eta + \mu \right\}
\end{aligned}
\end{equation}
We fix $\tau$ and solve the optimal value of $\mu$ as
\begin{equation} \label{eq:dro-KL-divergence-mu}
    \mu^* = \tau \log \mathbb{E}_{x \sim P}\left[e^{\ell(x; \theta) / \tau}\right] - \tau
\end{equation}
Therefore, by substituting the optimal $\mu^*$ in \cref{eq:dro-KL-divergence-mu} back to \cref{eq:dro-KL-divergence}, the original DRO problem is equivalent to
\begin{equation} \label{eq:dro-KL-divergence-final}
    \inf_{\theta, \tau > 0} \left\{ \tau \log \mathbb{E}_{x \sim P}\left[e^{\ell(x; \theta) / \tau}\right] + \tau \eta \right\}
\end{equation}
This completes the proof.
\end{proof}

\cref{thm:psl-lee-structure} is a direct consequence of \cref{lemma:dro-lee-structure}, when setting the loss term $\ell(x; \theta)$ as $\log\sigma(d_{uij})$ (\ie the pairwise loss term in BPR loss), $P$ as the uniform distribution over $\mathcal{I}$, $Q$ as the perturbed distribution constrained by KL divergence \wrt $P$, and $\tau = \tau(\eta)$ as the optimal value of Lagrange multiplier $\tau$ in \cref{eq:dro-lee-structure}. This completes the proof of \cref{thm:psl-lee-structure}.

\subsection{Proof of the Bound Connections between PSL and Other Losses in \texorpdfstring{\cref{sec:psl-discussion}}{Theorem (DCG Surrogate Losses)}} \label{sec:appendix-proof-all-dcg-surrogate}

\begin{proof}[Proof of the Bound Connections in \cref{sec:psl-discussion}.]

We have proved in \cref{lemma:psl-dcg-surrogate} that
\begin{equation} \label{eq:softmax-dcg-final-proof-appendix}
	-\log \mathrm{DCG}(u) + \log|\mathcal{P}_u| 
	\leq \frac{1}{|\mathcal{P}_u|} \sum_{i \in \mathcal{P}_u} \log\left(\sum_{j \in \mathcal{I}} \sigma(d_{uij})^{{1}/{\tau}} \right)
\end{equation}
with any surrogate activation $\sigma$ satisfying $\delta(d_{uij}) \leq \sigma(d_{uij})$. Furthermore, if two surrogate activations $\sigma_1, \sigma_2$ satisfy $\sigma_1(d_{uij}) \leq \sigma_2(d_{uij})$ for any $d_{uij} \in [-1, 1]$, then the corresponding DCG surrogate losses satisfy the same inequality. Therefore, we have
\begin{equation} \label{eq:all-dcg-1}
    -\log \mathrm{DCG} 
    \leq \mathcal{L}_{\textnormal{PSL}}
    \leq \mathcal{L}_{\textnormal{SL}}
    \leq \mathcal{L}_{\textnormal{AdvInfoNCE}}
\end{equation}
where the constant term is omitted for simplicity.


Finally, we prove that BPR serves as a surrogate loss for DCG. Apply Jensen's inequality to the RHS of \cref{eq:softmax-dcg-final-proof-appendix}, we have
\begin{equation} \label{eq:bpr-dcg-proof}
    \frac{1}{|\mathcal{P}_u|} \sum_{i \in \mathcal{P}_u} \log\left(\sum_{j \in \mathcal{I}} \sigma(d_{uij})^{{1}/{\tau}} \right)
    \leq \log\left(\frac{1}{|\mathcal{P}_u|} \sum_{i \in \mathcal{P}_u} \sum_{j \in \mathcal{I}} \sigma(d_{uij})^{{1}/{\tau}} \right)
\end{equation}
The RHS of \cref{eq:bpr-dcg-proof} is just $\log \mathcal{L}_{\textnormal{BPR}}(u) - \log|\mathcal{P}_u|$ with the same surrogate activation $\sigma$ in BPR. \cref{eq:bpr-dcg-proof} indicates that for any surrogate activation $\sigma$, the general PSL (including SL, BSL, and AdvInfoNCE) is always better than BPR with the same $\sigma$, \ie
\begin{equation} \label{eq:bpr-dcg-proof-final}
    -\log \mathrm{DCG} 
    \leq \mathcal{L}_{\textnormal{PSL}}
    \leq \log \mathcal{L}_{\textnormal{BPR}}
\end{equation}
where the constant term is omitted for simplicity. This completes the proof.
\end{proof}

\clearpage
\section{Experimental Details} \label{sec:appendix-experiments}

\subsection{Datasets} \label{sec:appendix-experiments-datasets}

The six benchmark datasets used in our experiments are summarized in \cref{tab:dataset-statistics}. In dataset preprocessing, following the standard practice in \citet{wang2019neural}, we use 10-core setting \citep{he2016vbpr}, \ie all users and items have at least 10 interactions. We also remove the low-quality interactions, such as those with ratings (if available) lower than 3. After preprocessing, we split the datasets into 80\% training and 20\% test sets. In IID and Noise settings, we further randomly split a 10\% validation set from training set for hyperparameter tuning.

\begin{table}[htbp]
    \centering
    \caption{Statistics of datasets. All datasets are cleaned by 10-core setting. If the dataset is used in both IID and OOD settings, the statistics below are provided for the IID setting.}
    \begin{tabular}{l|rrrr}
        \Xhline{1.2pt}
        \multicolumn{1}{c|}{\textbf{Dataset}} & \multicolumn{1}{c}{\textbf{\#Users}} & \multicolumn{1}{c}{\textbf{\#Items}} & \multicolumn{1}{c}{\textbf{\#Interactions}} & \multicolumn{1}{c}{\textbf{Density}} \bigstrut\\
        \Xhline{1pt}
        Amazon-Electronic \citep{he2016ups, mcauley2015image} & 13,455  & 8,360  & 234,521  & 0.00208 \bigstrut[t]\\
        Amazon-CD \citep{he2016ups, mcauley2015image} & 12,784  & 13,874  & 360,763  & 0.00203 \\
        Amazon-Movie \citep{he2016ups, mcauley2015image} & 26,968  & 18,563  & 762,957  & 0.00152 \\
        Gowalla \citep{cho2011friendship} & 29,858  & 40,988  & 1,027,464  & 0.00084 \\
        Yelp2018 \citep{yelp} & 55,616  & 34,945  & 1,506,777  & 0.00078 \\
        Amazon-Book \citep{he2016ups, mcauley2015image} & 135,109  & 115,172  & 4,042,382  & 0.00026 \bigstrut[b]\\
        \Xhline{1.2pt}
    \end{tabular}
    \label{tab:dataset-statistics}
\end{table}

The details of datasets are as follows:
\begin{itemize}[topsep=0pt,leftmargin=10pt]
    \setlength{\itemsep}{0pt}
    \item \textbf{Amazon} \citep{he2016ups, mcauley2015image}: The Amazon dataset is a large crawl of product reviews from Amazon\footnote{\url{https://www.amazon.com/}}. The 2014 version of Amazon dataset contains 142.8 million reviews spanning May 1996 - July 2014. We process four widely-used categories: Electronic, CD, Movie, and Book, with interactions ranging from 200K to 4M. 
    \item \textbf{Gowalla} \citep{cho2011friendship}: The Gowalla dataset is a check-in dataset collected from the location-based social network Gowalla\footnote{\url{https://en.wikipedia.org/wiki/Gowalla}}, including 1M users, 1M locations, and 6M check-ins.
    \item \textbf{Yelp2018} \citep{yelp}: The Yelp\footnote{\url{https://www.yelp.com/}} dataset is a subset of Yelp's businesses, reviews, and user data, which was originally used in the Yelp Dataset Challenge. The 2018 version of Yelp dataset contains 5M reviews.
\end{itemize}

The detailed dataset constructions in IID, OOD and Noise settings are as follows:
\begin{itemize}[topsep=0pt,leftmargin=10pt]
    \setlength{\itemsep}{0pt}
    \item \textbf{IID setting} \citep{he2020lightgcn}: In the IID setting, the test set is randomly split from the original dataset. Specifically, the positive items of each user are split into 80\% training and 20\% test sets. Moreover, the training set is further split into 90\% training and 10\% validation sets for hyperparameter tuning. In the IID setting, the training and test sets are both long-tail.
    \item \textbf{OOD setting} \citep{zhang2023invariant,wang2024distributionally,wei2021model}: In the OOD setting, a 20\% test set is uniformly sampled (\wrt items) from the original dataset, while the 80\% training set remains long-tail. The OOD setting is used to simulate real-world online recommender systems. In order to avoid leaking information about the test set distribution, we do not introduce the validation set.
    \item \textbf{Noise setting} \citep{wu2023bsl}: In the Noise setting, the validation and test sets are split in the same way as the IID setting. However, we randomly sample 10\% of the training set as the \emph{false negatives}. In Noise training, the negative items will be sampled from the false negatives with a probability of $p$ as the \emph{negative noise}, where $p \in \{0.05, 0.1, 0.2, 0.3, 0.5\}$ is \aka the \emph{noise ratio}.
\end{itemize}

All experiments are conducted on one NVIDIA GeForce RTX 4090 GPU and one AMD EPYC 7763 64-Core Processor.

\subsection{Metrics} \label{sec:appendix-experiments-metrics}

This section provides a detailed explanation of the recommendation metrics used or mentioned in our experiments. 

As stated in \cref{sec:experiments-setup}, we use Top-$K$ recommendation \citep{liu2009learning}. It should be noted that for each user, the positive items in the training set will be masked and not included in the Top-$K$ recommendations when evaluating, and the ground-truth positive items $\mathcal{P}_u$ only consist of those in the test set. For convenience, we denote the set of hit items in the Top-$K$ recommendations for user $u$ as $\mathcal{H}_u = \{ i \in \mathcal{P}_u : \pi_u(i) \leq K\}$. The recommendation metrics are defined as follows:

\begin{itemize}[topsep=0pt,leftmargin=10pt]
    \setlength{\itemsep}{0pt}
    \item $\bm{\mathrm{Recall}@K}$ \citep{fayyaz2020recommendation}: The proportion of hit items among $\mathcal{P}_u$ in the Top-$K$ recommendations, \ie $\mathrm{Recall}@K(u) = |\mathcal{H}_u| / |\mathcal{P}_u|$, and the overall $\mathrm{Recall}@K = \mathbb{E}_{u \sim \mathcal{U}} [\mathrm{Recall}@K(u)]$.
    \item $\bm{\mathrm{NDCG}@K}$ \citep{jarvelin2017ir}: The Discounted Cumulative Gain in the Top-$K$ recommendations ($\mathrm{DCG}@K$) is defined as $\mathrm{DCG}@K(u) = \sum_{i \in \mathcal{H}_u} 1 / \log_2(1 + \pi_u(i))$. Since the range of $\mathrm{DCG}@K$ will vary with the number of positive items $|\mathcal{P}_u|$, we should consider to normalize $\mathrm{DCG}@K$ to $[0, 1]$. The Normalized DCG in the Top-$K$ recommendations ($\mathrm{NDCG}@K) = \mathrm{DCG@}K(u) / \mathrm{IDCG}@K(u)$, where $\mathrm{IDCG}@K$ is the ideal $\mathrm{DCG}@K$, \ie $\mathrm{IDCG}@K(u) = \sum_{i = 1}^{\min\{K, |\mathcal{P}_u|\}} 1 / \log_2(1 + i)$. The overall $\mathrm{NDCG}@K = \mathbb{E}_{u \sim \mathcal{U}} [\mathrm{NDCG}@K(u)]$.
    \item $\bm{\mathrm{MRR}@K}$ \citep{lu2023optimizing, argyriou2020microsoft}: The Mean Reciprocal Rank (MRR) is originally defined as the reciprocal of the rank of the first hit item. Here we follow the definition of \citet{argyriou2020microsoft}'s to meet the requirements of multi-hit scenarios, \ie $\mathrm{MRR}@K(u) = \mathbb{E}_{i \sim \mathcal{H}_u} [1 / \pi_u(i)]$, and the overall $\mathrm{MRR}@K = \mathbb{E}_{u \sim \mathcal{U}} [\mathrm{MRR}@K(u)]$.
\end{itemize}

\subsection{Baselines} \label{sec:appendix-experiments-baselines}

We reproduced the following losses as baselines in our experiments:

\begin{itemize}[topsep=0pt,leftmargin=10pt]
    \setlength{\itemsep}{0pt}
    \item \textbf{BPR} \citep{rendle2009bpr}: A pairwise loss based on the Bayesian Maximum Likelihood Estimation (MLE). The objective of BPR is to learn a partial order among items, \ie positive items should be ranked higher than negative items. Furthermore, BPR is a surrogate loss for AUC metric \citep{rendle2009bpr, silveira2019good}. In our implementation, we follow \citet{he2020lightgcn}'s setting and use the inner product as the similarity function for user and item embeddings.
    \item \textbf{LLPAUC} \citep{shi2024lower}: A surrogate loss for Recall and Precision. In fact, LLPAUC is a surrogate loss for the lower-left part of AUC. In practice, LLPAUC is a min-max loss.
    \item \textbf{Softmax Loss (SL)} \citep{wu2024effectiveness}: A SOTA recommendation loss derived from the listwise MLE, \ie maximizing the probability of the positive items among all items. The effectiveness of SL has been thoroughly reviewed in \cref{sec:review-softmax,sec:analysis-softmax}. In fact, SL is a special case of PSL with surrogate activation $\sigma = \exp(\cdot)$.
    \item \textbf{AdvInfoNCE} \citep{zhang2024empowering}: A DRO-based modification of SL. AdvInfoNCE tries to introduce adaptive negative hardness to pairwise score $d_{uij}$ in SL (\cf \cref{eq:softmax-pairwise}). In \citet{zhang2024empowering}'s original design, AdvInfoNCE can be seen as a failure case of PSL with surrogate activation $\sigma = \exp(\exp(\cdot))$, as discussed in \cref{sec:psl-discussion}. In practice, AdvInfoNCE is a min-max loss.
    \item \textbf{BSL} \citep{wu2023bsl}: A DRO-based modification of SL. BSL applies additional DRO on the positive term in the pointwise form of SL.
\end{itemize}

The hyperparameter settings of each method are detailed in \cref{sec:appendix-experiments-hyperparameters}.

\subsection{Backbones} \label{sec:appendix-experiments-backbones}

We implemented three popular recommendation backbones in our experiments, including

\begin{itemize}[topsep=0pt,leftmargin=10pt]
    \setlength{\itemsep}{0pt}
    \item \textbf{MF} \citep{koren2009matrix}: MF is the most basic but still effective recommendation model, which factorizes the user-item interaction matrix into user and item embeddings. All the embedding-based recommendation models use MF as the first layer. Specifically, we set the embedding size $d = 64$ for all settings, following the setting in \citet{wang2019neural}.
    \item \textbf{LightGCN} \citep{he2020lightgcn}: LightGCN is an effective GNN-based recommendation model. LightGCN performs graph convolution on the user-item interaction graph, so as to aggregate the high-order interactions. Specifically, LightGCN simplifies NGCF \citep{wang2019neural} and only retains the non-parameterized graph convolution operator. In our experiments, we set the number of layers as 2, which aligns with the original setting in \citet{he2020lightgcn}.
    \item \textbf{XSimGCL} \citep{yu2023xsimgcl}: XSimGCL is a novel recommendation model based on contrastive learning \citep{liu2021self,jaiswal2020survey}. Based on 3-layers LightGCN, XSimGCL adds a random noise to the output embeddings of each layer, and introduces the contrastive learning between the final layer and the $l^*$-th layer, \ie adding an auxiliary InfoNCE loss \citep{oord2018representation} between these two layers. Following the original \citet{yu2023xsimgcl}'s setting, the modulus of random noise between each layer is set as 0.1, the contrastive layer $l^* = 1$ (where the embedding layer is 0-th layer), the temperature of InfoNCE is set as 0.1, and the weight of the auxiliary InfoNCE loss is set as 0.2 (except for the Amazon-Electronic dataset, where the weight is set as 0.05).
\end{itemize}

\subsection{Hyperparameters} \label{sec:appendix-experiments-hyperparameters}

\subsubsection{Hyperparameter Settings} \label{sec:appendix-experiments-hyperparameters-settings}

\textbf{Optimizer.}
We use Adam \citep{kingma2014adam} optimizer for training. The learning rate (lr) is searched in $\{10^{-1}, 10^{-2}, 10^{-3}\}$, except for BPR, where the lr is searched in $\{10^{-1}, 10^{-2}, 10^{-3}, 10^{-4}\}$. The weight decay (wd) is searched in $\{0, 10^{-4}, 10^{-5}, 10^{-6}\}$. The batch size is set as 1024, and the number of epochs is set as 200. Following the negative sampling strategy in \citet{wu2023bsl}, we uniformly sample 1000 negative items for each positive instance in training.

\textbf{Loss.}
The hyperparameters of each loss are detailed as follows:
\begin{itemize}[topsep=0pt,leftmargin=10pt]
    \setlength{\itemsep}{0pt}
    \item \textbf{BPR}: No other hyperparameters. 
    \item \textbf{LLPAUC}: Following \citet{shi2024lower}'s setting, the hyperparameters $\alpha \in \{0.1, 0.3, 0.5, 0.7, 0.9\}$ and $\beta \in \{0.01, 0.1\}$ are searched.
    \item \textbf{Softmax Loss (SL)}: The temperature $\tau \in \{0.005, 0.025, 0.05, 0.1, 0.25\}$ is searched.
    \item \textbf{AdvInfoNCE}: The temperature $\tau$ is searched in the same space as SL. The other hyperparameters are fixed as the original setting in \citet{zhang2024empowering}. Specifically, the negative weight is set as $64$, the adversarial learning will be performed every 5 epochs, with the adversarial learning rate as $5 \times 10^{-5}$. 
    \item \textbf{BSL}: The temperatures $\tau_1, \tau_2$ for positive and negative terms are searched in the same space as SL, respectively.
    \item \textbf{PSL}: The temperature $\tau$ is searched in the same space as SL. 
\end{itemize}

\subsubsection{Optimal Hyperparameters} \label{sec:appendix-experiments-best-hyperparameters}

The hyperparameters we search include the learning rate (lr), weight decay (wd), and other hyperparameters: $\{\alpha, \beta\}$ for LLPAUC, $\{\tau\}$ for SL, AdvInfoNCE, and PSL, $\{\tau_1, \tau_2\}$ for BSL. 

\textbf{IID optimal hyperparameters.} \cref{tab:appendix-iid-hyperparameters} shows the optimal hyperparameters of IID setting, including four datasets (Amazon-Book, Amazon-Electronic, Amazon-Movie, Gowalla) and three backbones (MF, LightGCN, XSimGCL).

\textbf{OOD optimal hyperparameters.} \cref{tab:appendix-ood-shift-hyperparameters} shows the optimal hyperparameters of OOD setting on MF backbone, including four datasets (Amazon-CD, Amazon-Electronic, Gowalla, Yelp2018).

\textbf{Noise optimal hyperparameters.} The Noise setting uses the optimal hyperparameters of IID setting, as listed in \cref{tab:appendix-iid-hyperparameters}. We compare the performance of each method under different noise ratios $p \in \{0.05, 0.1, 0.2, 0.3, 0.5\}$ on MF backbone and four IID datasets (Amazon-Book, Amazon-Electronic, Amazon-Movie, Gowalla).

\begin{table}[htbp]
    \scriptsize
    \centering
    \caption{Optimal hyperparameters of IID setting.}
    \begin{tabular}{c|l|ccc|ccc}
    \Xhline{1.2pt}
    \multirow{2}[4]{*}{\textbf{Model}} & \multicolumn{1}{c|}{\multirow{2}[4]{*}{\textbf{Loss}}} & \multicolumn{3}{c|}{\textbf{Amazon-Book}} & \multicolumn{3}{c}{\textbf{Amazon-Electronic}} \bigstrut\\
    \cline{3-8}          &       & \textbf{lr} & \textbf{wd} & \textbf{others} & \textbf{lr} & \textbf{wd} & \textbf{others} \bigstrut\\
    \Xhline{1.0pt}
    \multirow{9}[2]{*}{MF} & BPR   & $10^{-4}$ & 0 &       & $10^{-3}$ & $10^{-5}$     &  \bigstrut[t]\\
    & LLPAUC & $10^{-1}$ & 0     & \{0.7, 0.01\} & $10^{-1}$ & 0     & \{0.5, 0.01\} \\
    & AdvInfoNCE & $10^{-2}$ & 0     & \{0.05\} & $10^{-1}$ & 0     & \{0.1\} \\
    & SL & $10^{-1}$ & 0     & \{0.025\} & $10^{-2}$ & 0     & \{0.1\} \\
    & BSL   & $10^{-1}$ & 0     & \{0.25, 0.025\} & $10^{-1}$ & 0     & \{0.25, 0.1\} \\
    & PSL-tanh & $10^{-1}$ & 0     & \{0.025\} & $10^{-2}$ & 0     & \{0.1\} \\
    & PSL-atan & $10^{-1}$ & 0     & \{0.025\} & $10^{-2}$ & 0     & \{0.1\} \\
    & PSL-relu & $10^{-1}$ & 0     & \{0.025\} & $10^{-2}$ & 0     & \{0.1\} \bigstrut[b]\\
    \Xhline{1.0pt}
    \multirow{9}[2]{*}{LightGCN} & BPR   & $10^{-3}$ & 0 &       & $10^{-2}$ & $10^{-6}$     &  \bigstrut[t]\\
    & LLPAUC & $10^{-1}$ & 0     & \{0.7, 0.01\} & $10^{-1}$ & 0     & \{0.5, 0.01\} \\
    & AdvInfoNCE & $10^{-1}$ & 0     & \{0.05\} & $10^{-2}$ & 0     & \{0.1\} \\
    & SL & $10^{-1}$ & 0     & \{0.025\} & $10^{-2}$ & 0     & \{0.1\} \\
    & BSL   & $10^{-1}$ & 0     & \{0.25, 0.025\} & $10^{-2}$ & 0     & \{0.1, 0.1\} \\
    & PSL-tanh & $10^{-1}$ & 0     & \{0.025\} & $10^{-2}$ & 0     & \{0.1\} \\
    & PSL-atan & $10^{-1}$ & 0     & \{0.025\} & $10^{-2}$ & 0     & \{0.1\} \\
    & PSL-relu & $10^{-1}$ & 0     & \{0.025\} & $10^{-2}$ & 0     & \{0.1\} \bigstrut[b]\\
    \Xhline{1.0pt}
    \multirow{9}[2]{*}{XSimGCL} & BPR   & $10^{-4}$ & $10^{-5}$ &       & $10^{-2}$ & 0     &  \bigstrut[t]\\
    & LLPAUC & $10^{-1}$ & 0     & \{0.7, 0.01\} & $10^{-1}$ & 0     & \{0.3, 0.01\} \\
    & AdvInfoNCE & $10^{-1}$ & 0     & \{0.05\} & $10^{-1}$ & 0     & \{0.1\} \\
    & SL & $10^{-1}$ & 0     & \{0.025\} & $10^{-2}$ & 0     & \{0.1\} \\
    & BSL   & $10^{-1}$ & 0     & \{0.025, 0.025\} & $10^{-1}$ & 0     & \{0.05, 0.1\} \\
    & PSL-tanh & $10^{-2}$ & 0     & \{0.025\} & $10^{-1}$ & 0     & \{0.1\} \\
    & PSL-atan & $10^{-2}$ & 0     & \{0.025\} & $10^{-1}$ & 0     & \{0.1\} \\
    & PSL-relu & $10^{-1}$ & 0     & \{0.025\} & $10^{-1}$ & 0     & \{0.1\} \bigstrut[b]\\
    \Xhline{1.2pt}
    \multirow{2}[4]{*}{\textbf{Model}} & \multicolumn{1}{c|}{\multirow{2}[4]{*}{\textbf{Loss}}} & \multicolumn{3}{c|}{\textbf{Amazon-Movie}} & \multicolumn{3}{c}{\textbf{Gowalla}} \bigstrut\\
    \cline{3-8}          &       & \textbf{lr} & \textbf{wd} & \textbf{others} & \textbf{lr} & \textbf{wd} & \textbf{others} \bigstrut\\
    \Xhline{1.0pt}
    \multirow{9}[2]{*}{MF} & BPR   & $10^{-3}$ & $10^{-6}$ &       & $10^{-3}$ & $10^{-6}$     &  \bigstrut[t]\\
    & LLPAUC & $10^{-1}$ & 0     & \{0.7, 0.01\} & $10^{-1}$ & 0     & \{0.7, 0.01\} \\
    & AdvInfoNCE & $10^{-1}$ & 0     & \{0.05\} & $10^{-1}$ & 0     & \{0.05\} \\
    & SL & $10^{-1}$ & 0     & \{0.05\} & $10^{-1}$ & 0     & \{0.05\} \\
    & BSL   & $10^{-2}$ & 0     & \{0.25, 0.05\} & $10^{-1}$ & 0     & \{0.1, 0.05\} \\
    & PSL-tanh & $10^{-1}$ & 0     & \{0.05\} & $10^{-1}$ & 0     & \{0.05\} \\
    & PSL-atan & $10^{-1}$ & 0     & \{0.05\} & $10^{-1}$ & 0     & \{0.05\} \\
    & PSL-relu & $10^{-1}$ & 0     & \{0.05\} & $10^{-1}$ & 0     & \{0.05\} \bigstrut[b]\\
    \Xhline{1.0pt}
    \multirow{9}[2]{*}{LightGCN} & BPR   & $10^{-3}$ & 0 &       & $10^{-3}$ & 0     &  \bigstrut[t]\\
    & LLPAUC & $10^{-1}$ & 0     & \{0.7, 0.01\} & $10^{-1}$ & 0     & \{0.7, 0.01\} \\
    & AdvInfoNCE & $10^{-1}$ & 0     & \{0.05\} & $10^{-1}$ & 0     & \{0.05\} \\
    & SL & $10^{-1}$ & 0     & \{0.05\} & $10^{-1}$ & 0     & \{0.05\} \\
    & BSL   & $10^{-1}$ & 0     & \{0.025, 0.05\} & $10^{-1}$ & 0     & \{0.025, 0.05\} \\
    & PSL-tanh & $10^{-1}$ & 0     & \{0.05\} & $10^{-1}$ & 0     & \{0.05\} \\
    & PSL-atan & $10^{-1}$ & 0     & \{0.05\} & $10^{-1}$ & 0     & \{0.05\} \\
    & PSL-relu & $10^{-1}$ & 0     & \{0.05\} & $10^{-1}$ & 0     & \{0.05\} \bigstrut[b]\\
    \Xhline{1.0pt}
    \multirow{9}[2]{*}{XSimGCL} & BPR   & $10^{-4}$ & $10^{-4}$ &       & $10^{-4}$ & 0     &  \bigstrut[t]\\
    & LLPAUC & $10^{-1}$ & 0     & \{0.3, 0.01\} & $10^{-1}$ & 0     & \{0.7, 0.01\} \\
    & AdvInfoNCE & $10^{-1}$ & 0     & \{0.05\} & $10^{-1}$ & 0     & \{0.05\} \\
    & SL & $10^{-2}$ & 0     & \{0.05\} & $10^{-2}$ & 0     & \{0.05\} \\
    & BSL   & $10^{-1}$ & 0     & \{0.025, 0.05\} & $10^{-1}$ & 0     & \{0.025, 0.05\} \\
    & PSL-tanh & $10^{-1}$ & 0     & \{0.1\} & $10^{-1}$ & 0     & \{0.05\} \\
    & PSL-atan & $10^{-1}$ & 0     & \{0.05\} & $10^{-1}$ & 0     & \{0.05\} \\
    & PSL-relu & $10^{-2}$ & 0     & \{0.1\} & $10^{-1}$ & 0     & \{0.05\} \bigstrut[b]\\
    \Xhline{1.2pt}
    \end{tabular}
    \label{tab:appendix-iid-hyperparameters}
\end{table}

\begin{table}[htbp]
    \scriptsize
    \centering
    \caption{Optimal hyperparameters of OOD setting.}
    \begin{tabular}{c|l|ccc|ccc}
    \Xhline{1.2pt}
    \multirow{2}[4]{*}{\textbf{Model}} & \multicolumn{1}{c|}{\multirow{2}[4]{*}{\textbf{Loss}}} & \multicolumn{3}{c|}{\textbf{Amazon-CD}} & \multicolumn{3}{c}{\textbf{Amazon-Electronic}} \bigstrut\\
    \cline{3-8}          &       & \textbf{lr} & \textbf{wd} & \textbf{others} & \textbf{lr} & \textbf{wd} & \textbf{others} \bigstrut\\
    \Xhline{1.0pt}
    \multirow{9}[2]{*}{MF} & BPR   & $10^{-2}$ & $10^{-6}$ &       & $10^{-2}$ & $10^{-6}$     &  \bigstrut[t]\\
    & LLPAUC & $10^{-1}$ & 0     & \{0.7, 0.01\} & $10^{-1}$ & 0     & \{0.7, 0.1\} \\
    & AdvInfoNCE & $10^{-1}$ & 0     & \{0.05\} & $10^{-1}$ & 0     & \{0.05\} \\
    & SL & $10^{-1}$ & 0     & \{0.05\} & $10^{-1}$ & 0     & \{0.05\} \\
    & BSL   & $10^{-1}$ & 0     & \{0.05, 0.05\} & $10^{-1}$ & 0     & \{0.1, 0.05\} \\
    & PSL-tanh & $10^{-1}$ & 0     & \{0.05\} & $10^{-1}$ & 0     & \{0.05\} \\
    & PSL-atan & $10^{-1}$ & 0     & \{0.05\} & $10^{-1}$ & 0     & \{0.05\} \\
    & PSL-relu & $10^{-1}$ & 0     & \{0.05\} & $10^{-1}$ & 0     & \{0.05\} \bigstrut[b]\\
    \Xhline{1.2pt}
    \multirow{2}[4]{*}{\textbf{Model}} & \multicolumn{1}{c|}{\multirow{2}[4]{*}{\textbf{Loss}}} & \multicolumn{3}{c|}{\textbf{Gowalla}} & \multicolumn{3}{c}{\textbf{Yelp2018}} \bigstrut\\
    \cline{3-8}          &       & \textbf{lr} & \textbf{wd} & \textbf{others} & \textbf{lr} & \textbf{wd} & \textbf{others} \bigstrut\\
    \Xhline{1.0pt}
    \multirow{9}[2]{*}{MF} & BPR   & $10^{-3}$ & 0 &       & $10^{-3}$ & 0     &  \bigstrut[t]\\
    & LLPAUC & $10^{-1}$ & 0     & \{0.7, 0.01\} & $10^{-1}$ & 0     & \{0.7, 0.01\} \\
    & AdvInfoNCE & $10^{-1}$ & 0     & \{0.05\} & $10^{-1}$ & 0     & \{0.05\} \\
    & SL & $10^{-1}$ & 0     & \{0.025\} & $10^{-1}$ & 0     & \{0.05\} \\
    & BSL   & $10^{-1}$ & 0     & \{0.25, 0.025\} & $10^{-1}$ & 0     & \{0.1, 0.05\} \\
    & PSL-tanh & $10^{-1}$ & 0     & \{0.025\} & $10^{-1}$ & 0     & \{0.025\} \\
    & PSL-atan & $10^{-1}$ & 0     & \{0.025\} & $10^{-1}$ & 0     & \{0.025\} \\
    & PSL-relu & $10^{-1}$ & 0     & \{0.025\} & $10^{-1}$ & 0     & \{0.025\} \bigstrut[b]\\
    \Xhline{1.2pt}
    \end{tabular}
    \label{tab:appendix-ood-shift-hyperparameters}
\end{table}

\clearpage
\section{Supplementary Experiments} \label{sec:appendix-experiments-supplementary}

\subsection{Noise Results} \label{sec:appendix-experiments-results-ood-noise}

The Recall@20 and NDCG@20 results under Noise setting on four datasets (Amazon-Book, Amazon-Electronic, Amazon-Movie, Gowalla) are shown in \cref{fig:appendix-ood-noise-results-amazon-book,fig:appendix-ood-noise-results-amazon-electronic,fig:appendix-ood-noise-results-amazon-movie,fig:appendix-ood-noise-results-gowalla}.

\begin{figure}[htbp]
    \centering
    \begin{subfigure}[b]{0.48\textwidth}
        \includegraphics[width=\textwidth]{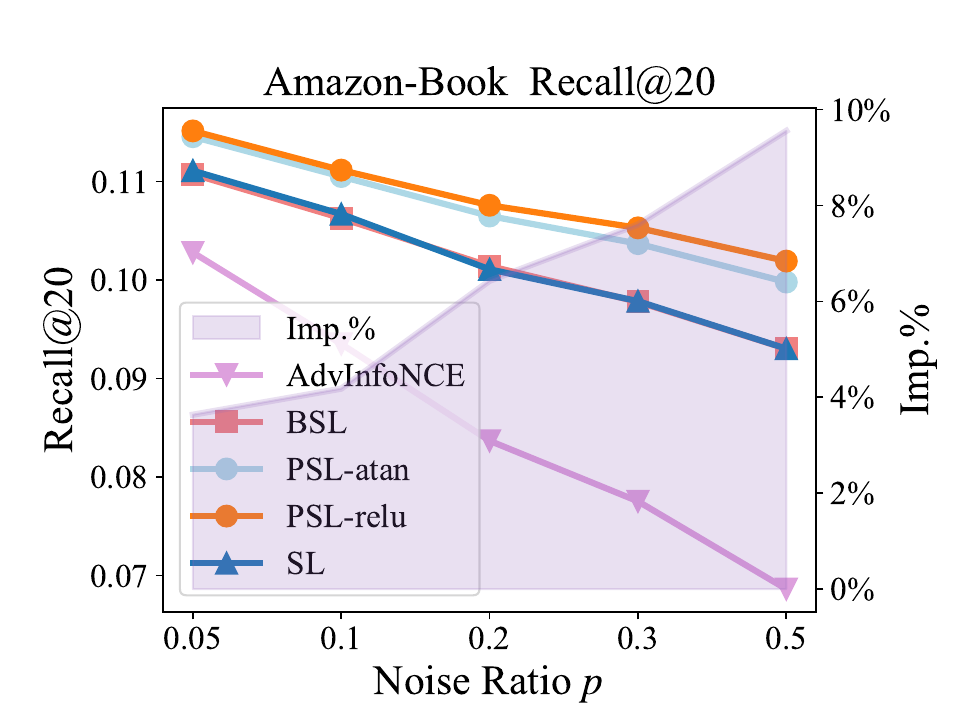}
        \caption{Amazon-Book (Recall@20)}
    \end{subfigure}
    \begin{subfigure}[b]{0.48\textwidth}
        \includegraphics[width=\textwidth]{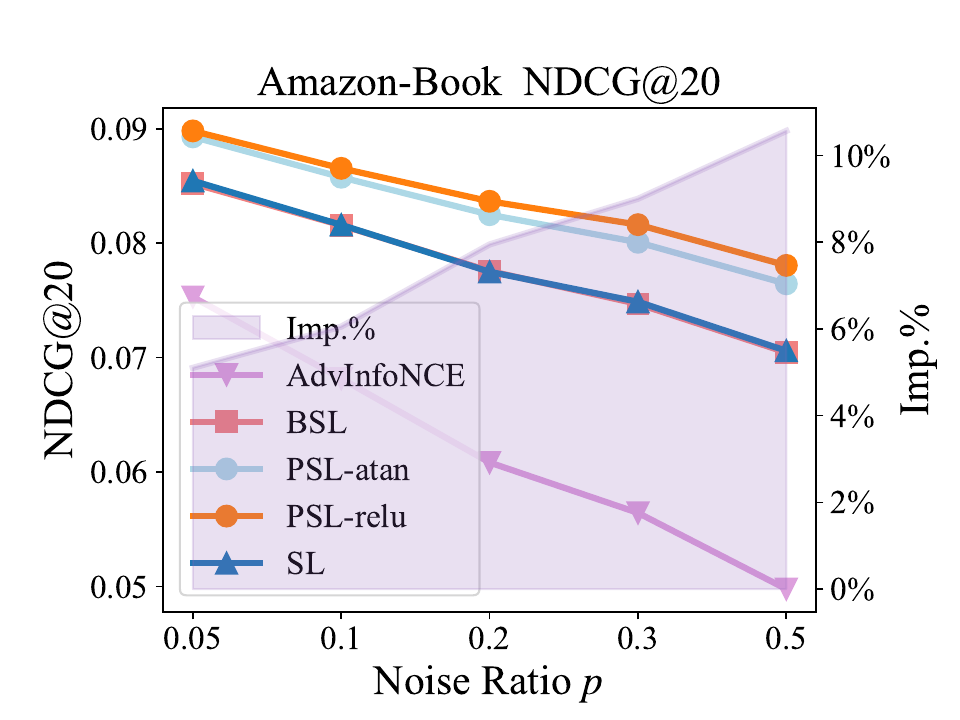}
        \caption{Amazon-Book (NDCG@20)}
    \end{subfigure}
    \caption{Noise results on Amazon-Book dataset.}
    \label{fig:appendix-ood-noise-results-amazon-book}
\end{figure}

\begin{figure}[htbp]
    \centering
    \begin{subfigure}[b]{0.48\textwidth}
        \includegraphics[width=\textwidth]{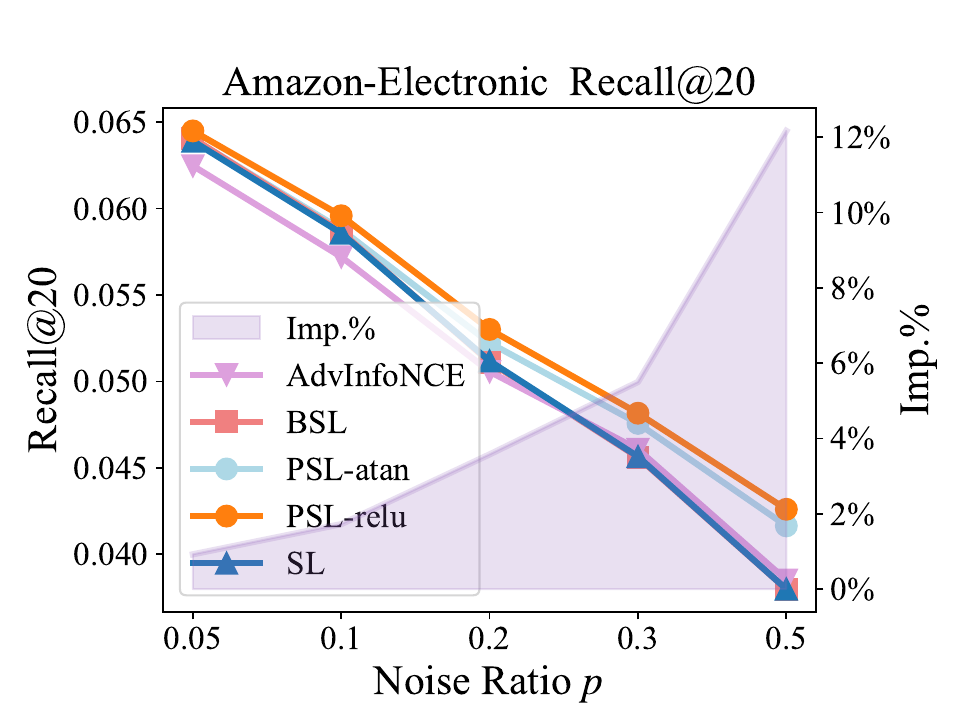}
        \caption{Amazon-Electronic (Recall@20)}
    \end{subfigure}
    \begin{subfigure}[b]{0.48\textwidth}
        \includegraphics[width=\textwidth]{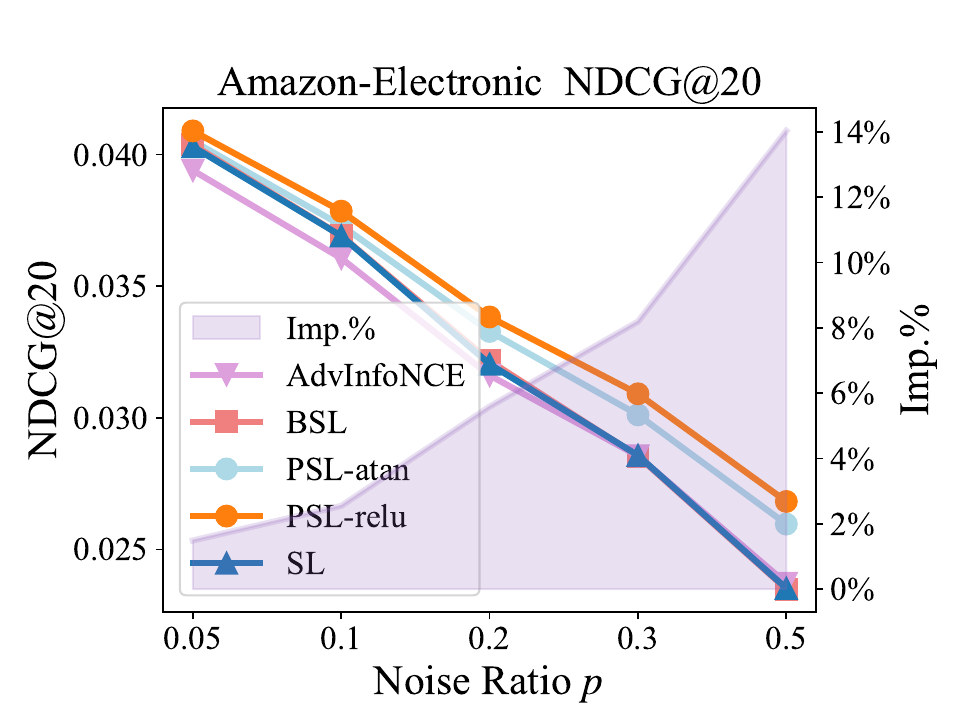}
        \caption{Amazon-Electronic (NDCG@20)}
    \end{subfigure}
    \caption{Noise results on Amazon-Electronic dataset.}
    \label{fig:appendix-ood-noise-results-amazon-electronic}
\end{figure}

\begin{figure}[htbp]
    \centering
    \begin{subfigure}[b]{0.48\textwidth}
        \includegraphics[width=\textwidth]{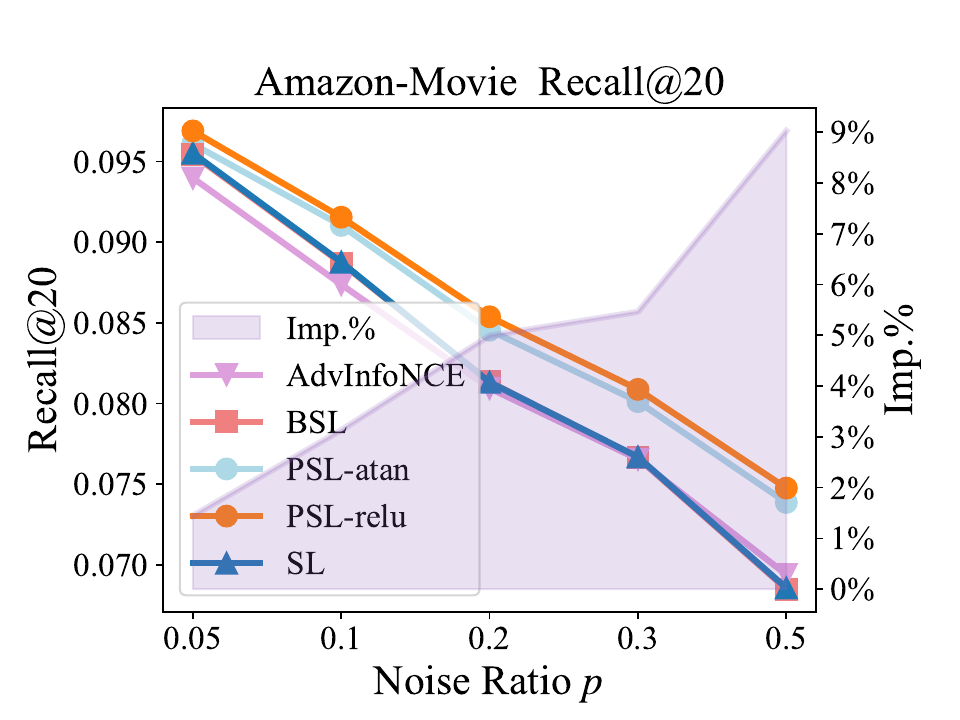}
        \caption{Amazon-Movie (Recall@20)}
    \end{subfigure}
    \begin{subfigure}[b]{0.48\textwidth}
        \includegraphics[width=\textwidth]{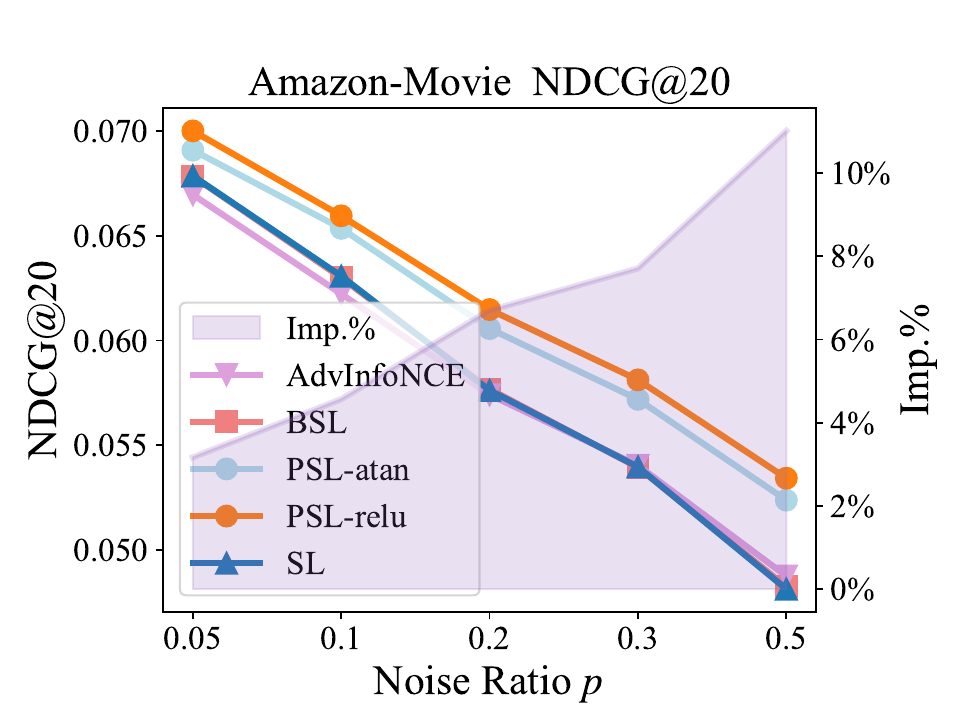}
        \caption{Amazon-Movie (NDCG@20)}
    \end{subfigure}
    \caption{Noise results on Amazon-Movie dataset.}
    \label{fig:appendix-ood-noise-results-amazon-movie}
\end{figure}

\begin{figure}[htbp]
    \centering
    \begin{subfigure}[b]{0.48\textwidth}
        \includegraphics[width=\textwidth]{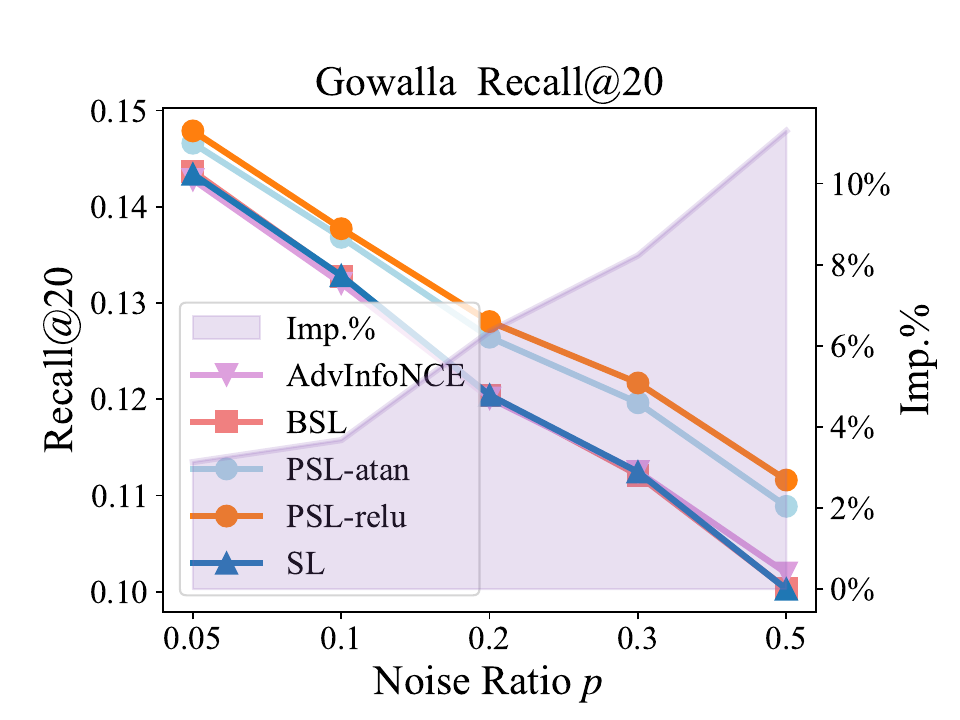}
        \caption{Gowalla (Recall@20)}
    \end{subfigure}
    \begin{subfigure}[b]{0.48\textwidth}
        \includegraphics[width=\textwidth]{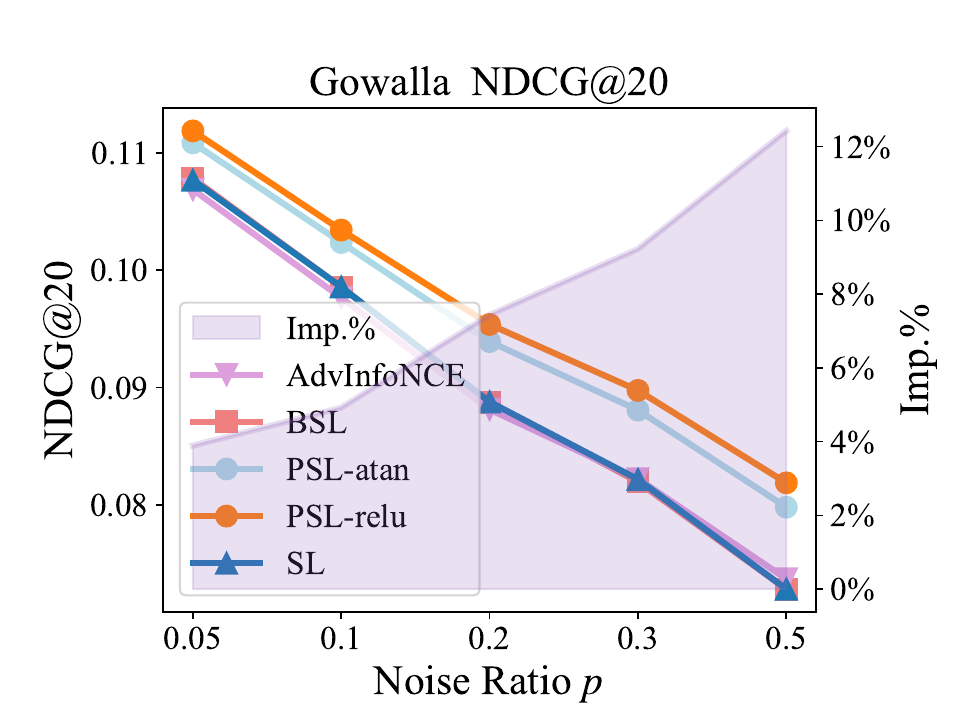}
        \caption{Gowalla (NDCG@20)}
    \end{subfigure}
    \caption{Noise results on Gowalla dataset.}
    \label{fig:appendix-ood-noise-results-gowalla}
\end{figure}

\clearpage
\subsection{PSL-softplus Results} \label{sec:appendix-experiments-results-psl-softplus}

BPR uses Softplus \citep{dugas2000incorporating} as $\log \sigma$, \ie $\sigma(d_{uij}) = \exp(d_{uij}) + 1$, which is looser than SL. That is, this surrogate activation is not a suitable choice for PSL. We call this PSL variant as \textbf{PSL-softplus}.

In this section, we conduct experiments to evaluate the performance of PSL-softplus with surrogate activation $\sigma(d_{uij}) = \exp(d_{uij}) + 1$. The IID, OOD, and Noise results of PSL-softplus are shown in \cref{tab:appendix-psl-softplus-iid-results,tab:appendix-psl-softplus-ood-shift-results}, \cref{fig:appendix-ood-noise-results-softplus-amazon-book,fig:appendix-ood-noise-results-softplus-amazon-electronic,fig:appendix-ood-noise-results-softplus-amazon-movie,fig:appendix-ood-noise-results-softplus-gowalla}, respectively. Results demonstrate that PSL-softplus is inferior to SL and three PSLs in all settings. This confirms our claim -- the choice of surrogate activation $\sigma$ is crucial, and an unreasonable or intuitive design will decrease the accuracy.

\begin{table}[htbp]
    \scriptsize
    \centering
    \caption{IID results of PSL-softplus. The results of SL, PSL-tanh, PSL-atan, and PSL-relu have been listed in \cref{tab:iid-results}. The blue-colored results are better than PSL-softplus.}
    \begin{tabular}{c|l|cc|cc|cc|cc}
    \Xhline{1.2pt}
    \multirow{2}[4]{*}{\textbf{Model}} & \multicolumn{1}{c|}{\multirow{2}[4]{*}{\textbf{Loss}}} & \multicolumn{2}{c|}{\textbf{Amazon-Book}} & \multicolumn{2}{c|}{\textbf{Amazon-Electronic}} & \multicolumn{2}{c|}{\textbf{Amazon-Movie}} & \multicolumn{2}{c}{\textbf{Gowalla}} \bigstrut\\
    \cline{3-10}          &       & \textbf{Recall} & \textbf{NDCG} & \textbf{Recall} & \textbf{NDCG} & \textbf{Recall} & \textbf{NDCG} & \textbf{Recall} & \textbf{NDCG} \bigstrut\\
    \Xhline{1.0pt}
    \multirow{5}[4]{*}{MF} & SL & \cellcolor{lightblue}0.1559  & \cellcolor{lightblue}0.1210  & 0.0821  & \cellcolor{lightblue}0.0529  & \cellcolor{lightblue}0.1286  & \cellcolor{lightblue}0.0929  & \cellcolor{lightblue}0.2064  & \cellcolor{lightblue}0.1624  \bigstrut[t]\\
    & PSL-tanh & \cellcolor{lightblue}0.1567  & \cellcolor{lightblue}0.1225  & \cellcolor{lightblue}0.0832  & \cellcolor{lightblue}0.0535  & \cellcolor{lightblue}0.1297  & \cellcolor{lightblue}0.0941  & \cellcolor{lightblue}0.2088  & \cellcolor{lightblue}0.1646  \\
    & PSL-atan & \cellcolor{lightblue}0.1567  & \cellcolor{lightblue}0.1226  & \cellcolor{lightblue}0.0832  & \cellcolor{lightblue}0.0535  & \cellcolor{lightblue}0.1296  & \cellcolor{lightblue}0.0941  & \cellcolor{lightblue}0.2087  & \cellcolor{lightblue}0.1646  \\
    & PSL-relu & \cellcolor{lightblue}0.1569  & \cellcolor{lightblue}0.1227  & \cellcolor{lightblue}0.0838  & \cellcolor{lightblue}0.0541  & \cellcolor{lightblue}0.1299  & \cellcolor{lightblue}0.0945  & \cellcolor{lightblue}0.2089  & \cellcolor{lightblue}0.1647  \bigstrut[b]\\
    \cline{2-10}          & PSL-softplus & 0.1536  & 0.1149  & 0.0826  & 0.0522  & 0.1280  & 0.0919  & 0.2053  & 0.1613  \bigstrut\\
    \Xhline{1.0pt}
    \multirow{5}[4]{*}{LightGCN} & SL & \cellcolor{lightblue}0.1567  & \cellcolor{lightblue}0.1220  & \cellcolor{lightblue}0.0823  & \cellcolor{lightblue}0.0526  & \cellcolor{lightblue}0.1304  & \cellcolor{lightblue}0.0941  & \cellcolor{lightblue}0.2068  & \cellcolor{lightblue}0.1628  \bigstrut[t]\\
    & PSL-tanh & \cellcolor{lightblue}0.1575  & \cellcolor{lightblue}0.1233  & \cellcolor{lightblue}0.0825  & \cellcolor{lightblue}0.0532  & \cellcolor{lightblue}0.1300  & \cellcolor{lightblue}0.0947  & \cellcolor{lightblue}0.2091  & \cellcolor{lightblue}0.1648  \\
    & PSL-atan & \cellcolor{lightblue}0.1575  & \cellcolor{lightblue}0.1233  & \cellcolor{lightblue}0.0825  & \cellcolor{lightblue}0.0532  & \cellcolor{lightblue}0.1300  & \cellcolor{lightblue}0.0948  & \cellcolor{lightblue}0.2091  & \cellcolor{lightblue}0.1648  \\
    & PSL-relu & \cellcolor{lightblue}0.1575  & \cellcolor{lightblue}0.1233  & \cellcolor{lightblue}0.0830  & \cellcolor{lightblue}0.0536  & \cellcolor{lightblue}0.1300  & \cellcolor{lightblue}0.0953  & \cellcolor{lightblue}0.2086  & \cellcolor{lightblue}0.1648  \bigstrut[b]\\
    \cline{2-10}          & PSL-softplus & 0.1536  & 0.1152  & 0.0814  & 0.0514  & 0.1296  & 0.0932  & 0.2053  & 0.1613  \bigstrut\\
    \Xhline{1.0pt}
    \multirow{5}[4]{*}{XSimGCL} & SL & \cellcolor{lightblue}0.1549  & \cellcolor{lightblue}0.1207  & \cellcolor{lightblue}0.0772  & \cellcolor{lightblue}0.0490  & \cellcolor{lightblue}0.1255  & \cellcolor{lightblue}0.0905  & \cellcolor{lightblue}0.2005  & \cellcolor{lightblue}0.1570  \bigstrut[t]\\
    & PSL-tanh & \cellcolor{lightblue}0.1567  & \cellcolor{lightblue}0.1225  & \cellcolor{lightblue}0.0790  & \cellcolor{lightblue}0.0501  & \cellcolor{lightblue}0.1308  & \cellcolor{lightblue}0.0926  & \cellcolor{lightblue}0.2034  & \cellcolor{lightblue}0.1591  \\
    & PSL-atan & \cellcolor{lightblue}0.1565  & \cellcolor{lightblue}0.1225  & \cellcolor{lightblue}0.0792  & \cellcolor{lightblue}0.0502  & \cellcolor{lightblue}0.1253  & \cellcolor{lightblue}0.0917  & \cellcolor{lightblue}0.2035  & \cellcolor{lightblue}0.1591  \\
    & PSL-relu & \cellcolor{lightblue}0.1571  & \cellcolor{lightblue}0.1228  & \cellcolor{lightblue}0.0801  & \cellcolor{lightblue}0.0507  & \cellcolor{lightblue}0.1313  & \cellcolor{lightblue}0.0935  & \cellcolor{lightblue}0.2037  & \cellcolor{lightblue}0.1593  \bigstrut[b]\\
    \cline{2-10}          & PSL-softplus & 0.1545  & 0.1161  & 0.0770  & 0.0484  & 0.1242  & 0.0894  & 0.1996  & 0.1557  \bigstrut\\
    \Xhline{1.2pt}
    \end{tabular}
    \label{tab:appendix-psl-softplus-iid-results}
\end{table}

\begin{table}[htbp]
    \scriptsize
    \centering
    \caption{OOD results of PSL-softplus. The results of SL, PSL-tanh, PSL-atan, and PSL-relu have been listed in \cref{tab:ood-shift-results}. The blue-colored results are better than PSL-softplus.}
    \begin{tabular}{l|cc|cc|cc|cc}
    \Xhline{1.2pt}
    \multicolumn{1}{c|}{\multirow{2}[4]{*}{\textbf{Loss}}} & \multicolumn{2}{c|}{\textbf{Amazon-CD}} & \multicolumn{2}{c|}{\textbf{Amazon-Electronic}} & \multicolumn{2}{c|}{\textbf{Gowalla}} & \multicolumn{2}{c}{\textbf{Yelp2018}} \bigstrut\\
    \cline{2-9}          & \textbf{Recall} & \textbf{NDCG} & \textbf{Recall} & \textbf{NDCG} & \textbf{Recall} & \textbf{NDCG} & \textbf{Recall} & \textbf{NDCG} \bigstrut\\
    \Xhline{1.0pt}
    SL & \cellcolor{lightblue}0.1184  & \cellcolor{lightblue}0.0815  & 0.0230  & \cellcolor{lightblue}0.0142  & \cellcolor{lightblue}0.1006  & \cellcolor{lightblue}0.0737  & \cellcolor{lightblue}0.0349  & \cellcolor{lightblue}0.0224  \bigstrut[t]\\
    PSL-tanh & \cellcolor{lightblue}0.1202  & \cellcolor{lightblue}0.0834  & \cellcolor{lightblue}0.0239  & \cellcolor{lightblue}0.0146  & \cellcolor{lightblue}0.1013  & \cellcolor{lightblue}0.0748  & \cellcolor{lightblue}0.0357  & \cellcolor{lightblue}0.0228  \\
    PSL-atan & \cellcolor{lightblue}0.1202  & \cellcolor{lightblue}0.0835  & \cellcolor{lightblue}0.0239  & \cellcolor{lightblue}0.0146  & \cellcolor{lightblue}0.1013  & \cellcolor{lightblue}0.0748  & \cellcolor{lightblue}0.0358  & \cellcolor{lightblue}0.0228  \\
    PSL-relu & \cellcolor{lightblue}0.1203  & \cellcolor{lightblue}0.0839  & \cellcolor{lightblue}0.0241  & \cellcolor{lightblue}0.0149  & \cellcolor{lightblue}0.1014  & \cellcolor{lightblue}0.0752  & \cellcolor{lightblue}0.0358  & \cellcolor{lightblue}0.0229  \bigstrut[b]\\
    \hline
    PSL-softplus & 0.1169  & 0.0799  & 0.0232  & 0.0139  & 0.0909  & 0.0665  & 0.0346  & 0.0222  \bigstrut\\
    \Xhline{1.2pt}
    \end{tabular}
    \label{tab:appendix-psl-softplus-ood-shift-results}
\end{table}

\begin{figure}[htbp]
    \centering
    \begin{subfigure}[b]{0.48\textwidth}
        \includegraphics[width=\textwidth]{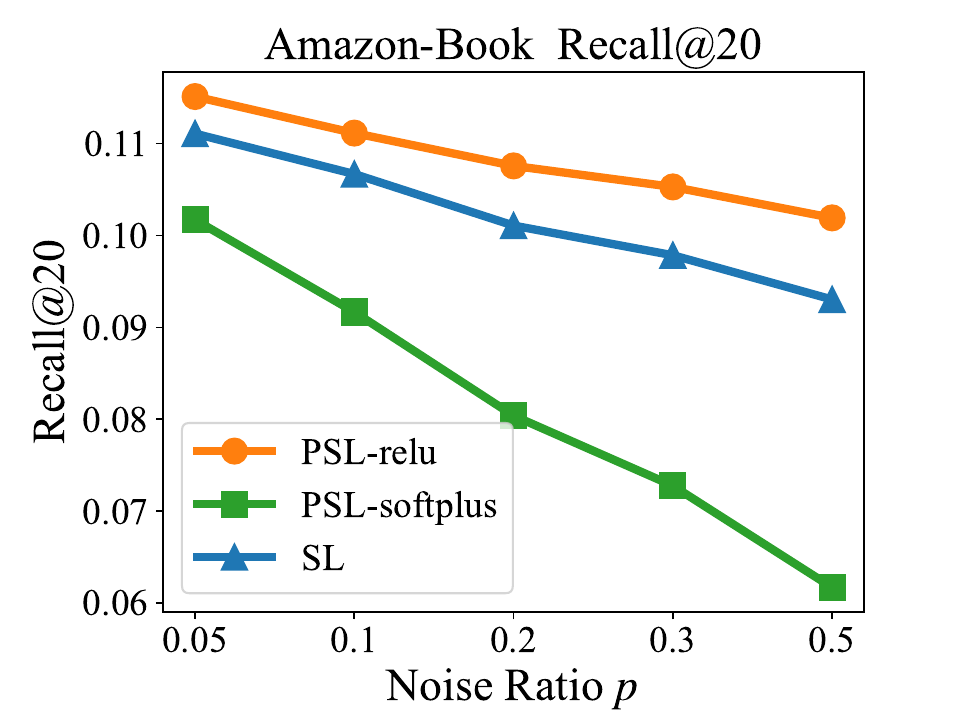}
        \caption{Amazon-Book (Recall@20)}
    \end{subfigure}
    \begin{subfigure}[b]{0.48\textwidth}
        \includegraphics[width=\textwidth]{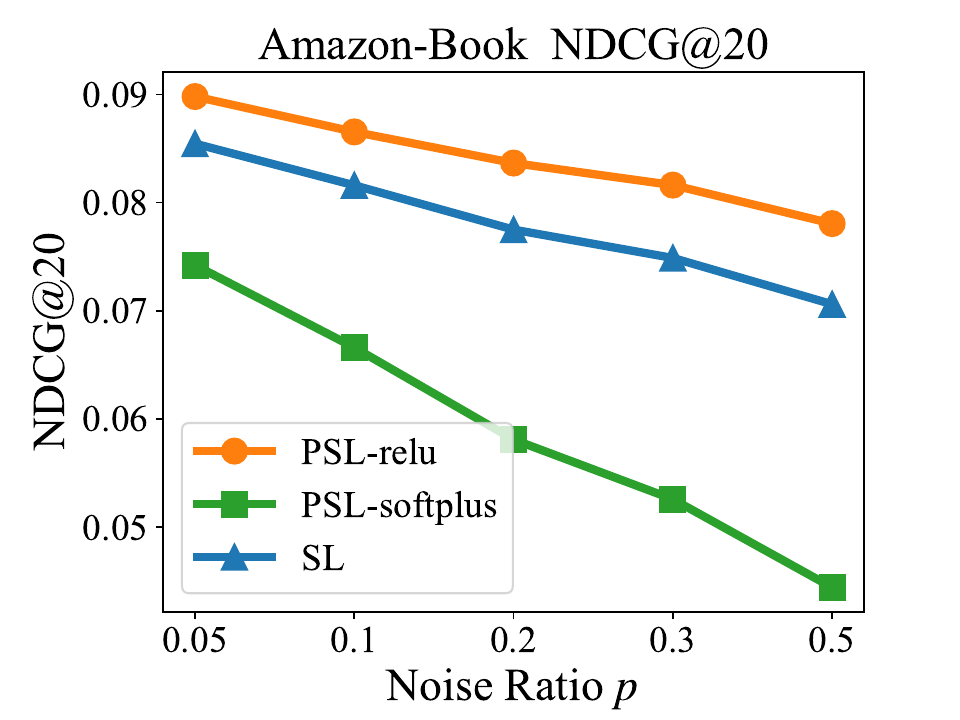}
        \caption{Amazon-Book (NDCG@20)}
    \end{subfigure}
    \caption{Noise results of PSL-softplus on Amazon-Book dataset.}
    \label{fig:appendix-ood-noise-results-softplus-amazon-book}
\end{figure}

\begin{figure}[htbp]
    \centering
    \begin{subfigure}[b]{0.48\textwidth}
        \includegraphics[width=\textwidth]{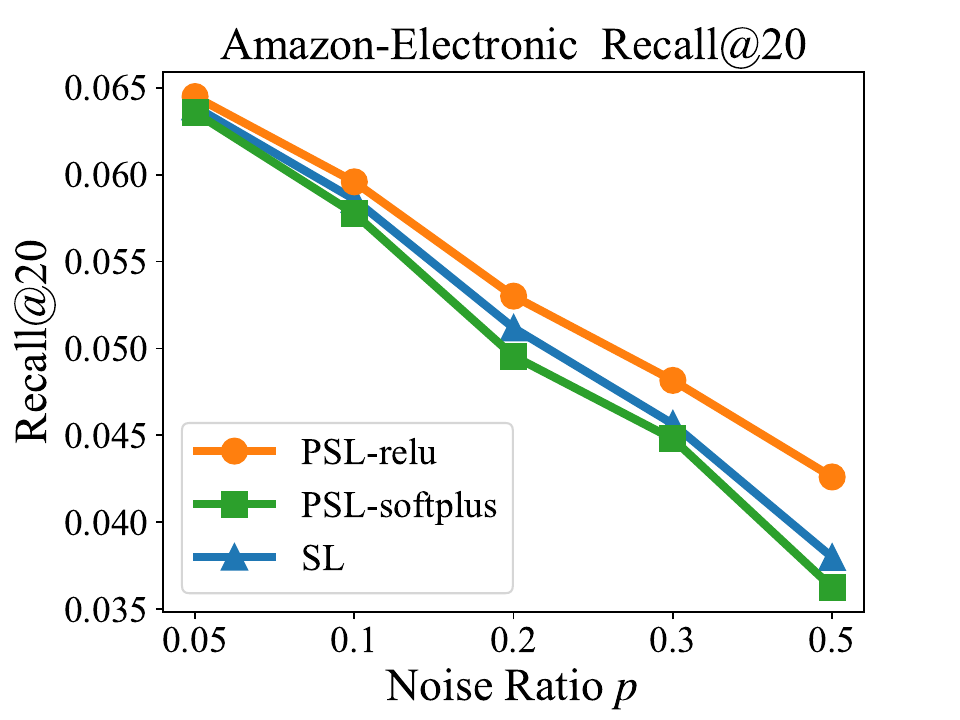}
        \caption{Amazon-Electronic (Recall@20)}
    \end{subfigure}
    \begin{subfigure}[b]{0.48\textwidth}
        \includegraphics[width=\textwidth]{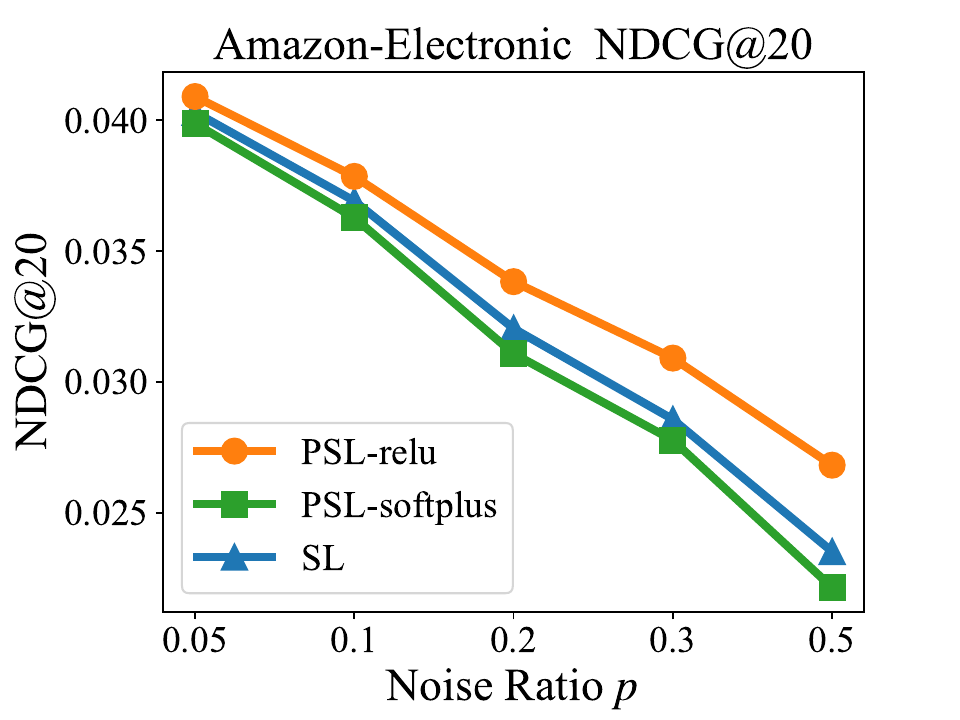}
        \caption{Amazon-Electronic (NDCG@20)}
    \end{subfigure}
    \caption{Noise results of PSL-softplus on Amazon-Electronic dataset.}
    \label{fig:appendix-ood-noise-results-softplus-amazon-electronic}
\end{figure}

\begin{figure}[htbp]
    \centering
    \begin{subfigure}[b]{0.48\textwidth}
        \includegraphics[width=\textwidth]{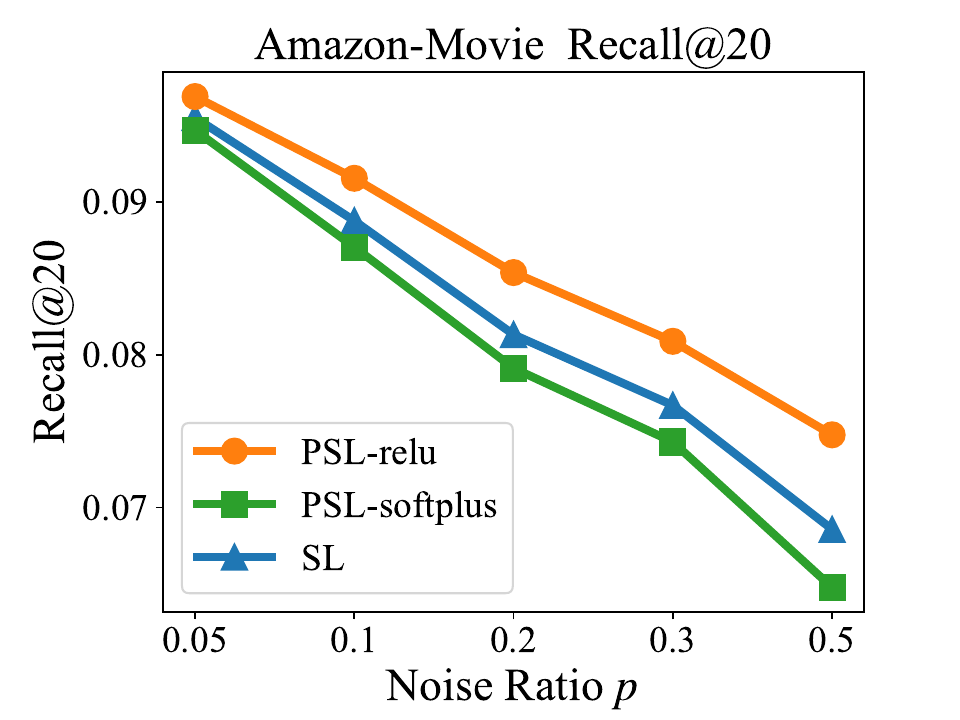}
        \caption{Amazon-Movie (Recall@20)}
    \end{subfigure}
    \begin{subfigure}[b]{0.48\textwidth}
        \includegraphics[width=\textwidth]{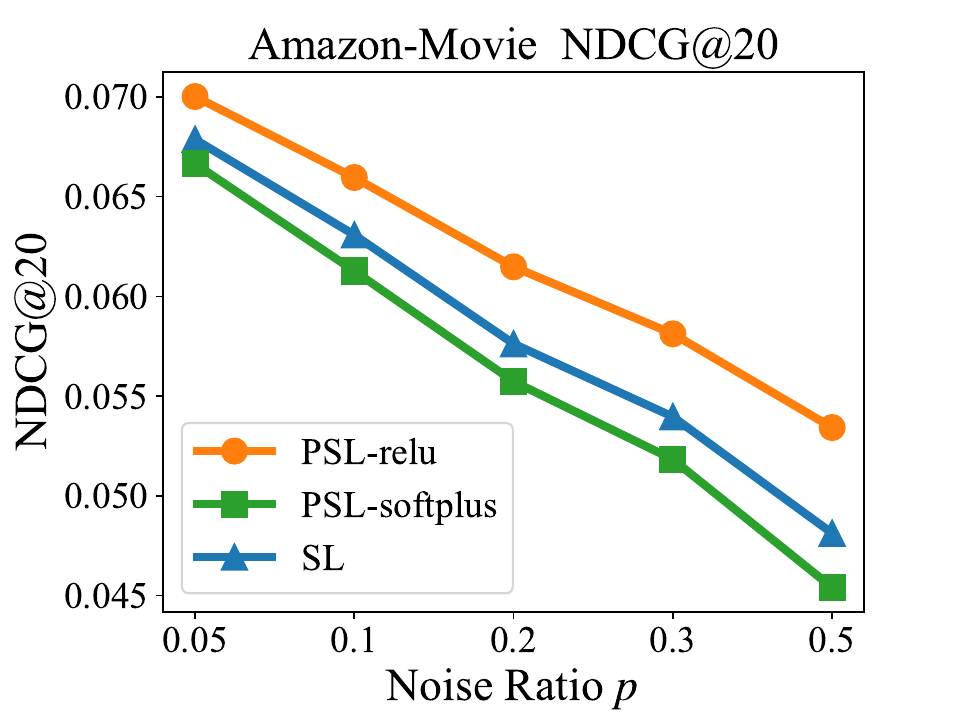}
        \caption{Amazon-Movie (NDCG@20)}
    \end{subfigure}
    \caption{Noise results of PSL-softplus on Amazon-Movie dataset.}
    \label{fig:appendix-ood-noise-results-softplus-amazon-movie}
\end{figure}

\begin{figure}[htbp]
    \centering
    \begin{subfigure}[b]{0.48\textwidth}
        \includegraphics[width=\textwidth]{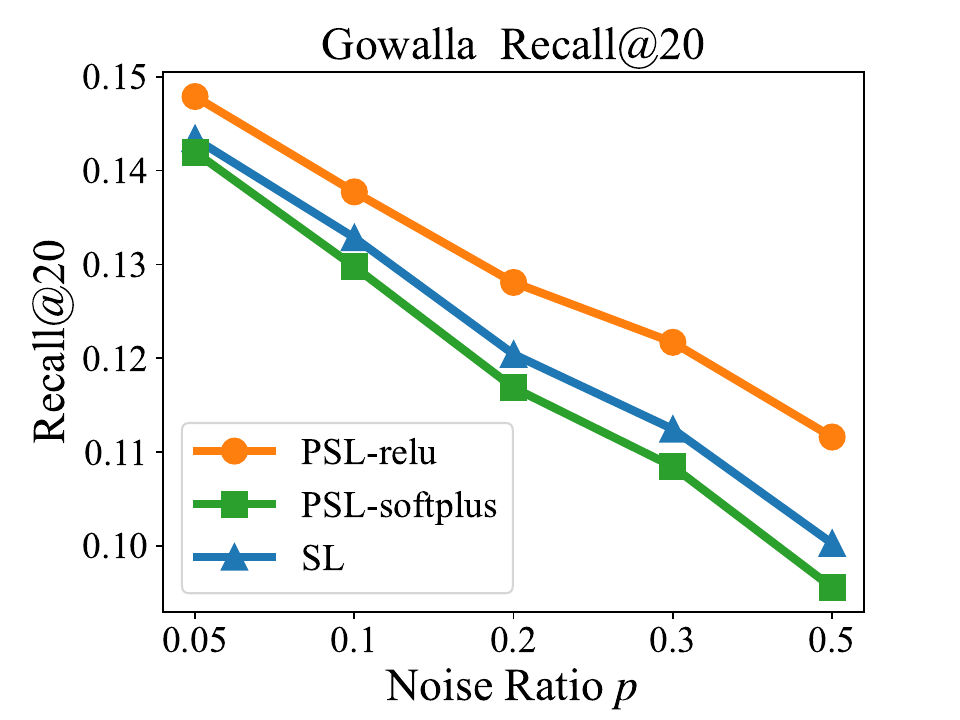}
        \caption{Gowalla (Recall@20)}
    \end{subfigure}
    \begin{subfigure}[b]{0.48\textwidth}
        \includegraphics[width=\textwidth]{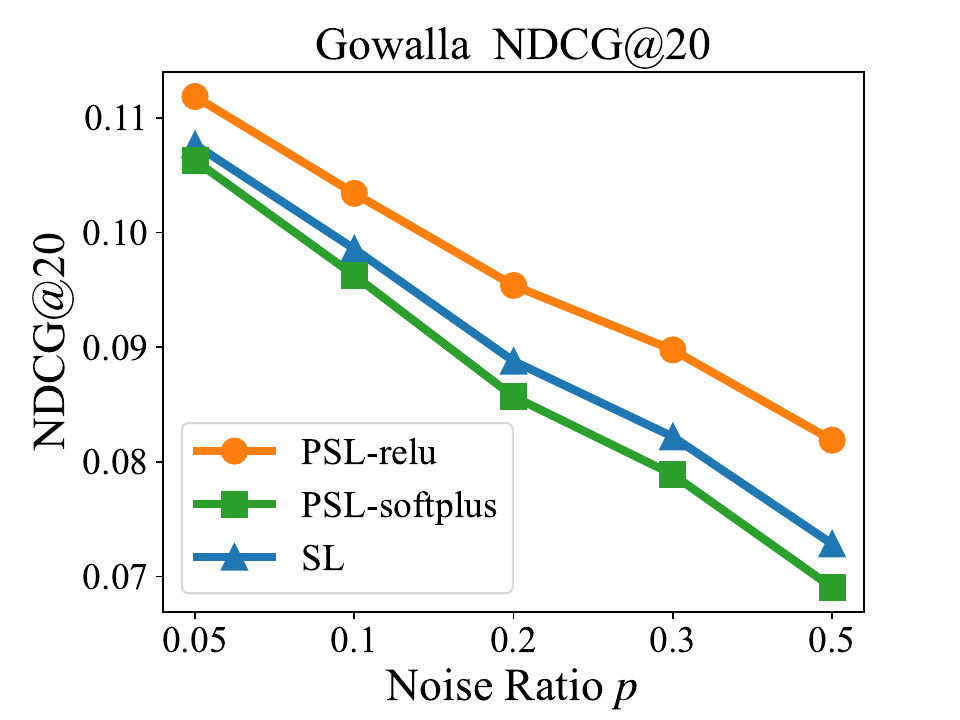}
        \caption{Gowalla (NDCG@20)}
    \end{subfigure}
    \caption{Noise results of PSL-softplus on Gowalla dataset.}
    \label{fig:appendix-ood-noise-results-softplus-gowalla}
\end{figure}

\clearpage
\subsection{Comparisons of Two Extension Forms} \label{sec:appendix-experiments-results-temperature-designs}

In this section, we compare the two different extension forms from SL to PSL, \ie \textbf{outside form} $\sigma(d_{uij})^{1/\tau}$ and \textbf{inside form} $\sigma(d_{uij} / \tau)$. As discussed in \cref{sec:psl-discussion}, the outside form scales in the value domain, while the inside form scales in the definition domain. Therefore, the inside form will lead to certain drawbacks: 1) the condition \eqref{eq:psl-activation-condition} must be satisfied over the entire $d_{uij} \in \mathbb{R}$ to ensure a tighter DCG surrogate loss (\cf \cref{lemma:psl-dcg-surrogate}), which is hard to achieve; 2) the value of $\sigma(d_{uij} / \tau)$ and its gradient may be quickly exploded when $\tau \to 0$, as the range of $d_{uij} / \tau$ is hard to control, which may cause numerical instability.

To empirically compare the above two extension forms, we conduct experiments on MF backbone and four IID datasets. Specifically, since there exists serious numerical instability, we expand the range of $\tau$ to $\{0.005, 0.025, 0.05, 0.1, 0.25, 0.5, 1.0\}$ for the inside form, where the outside form remains the same search space $\tau \in \{0.005, 0.025, 0.05, 0.1, 0.25\}$. The results are shown in \cref{tab:appendix-temperature-designs-comparison}, demonstrating that the outside form is superior to the inside form in all cases. 

\begin{table}[htbp]
    \scriptsize
    \centering
    \caption{Extension forms comparisons on MF under IID setting. The blue-colored results are better than the counterpart.}
    \begin{tabular}{c|l|cc|cc|cc|cc}
    \Xhline{1.2pt}
    \multirow{2}[4]{*}{\textbf{Form}} & \multicolumn{1}{c|}{\multirow{2}[4]{*}{\textbf{Loss}}} & \multicolumn{2}{c|}{\textbf{Amazon-Book}} & \multicolumn{2}{c|}{\textbf{Amazon-Electronic}} & \multicolumn{2}{c|}{\textbf{Amazon-Movie}} & \multicolumn{2}{c}{\textbf{Gowalla}} \bigstrut\\
    \cline{3-10}          &       & \textbf{Recall} & \textbf{NDCG} & \textbf{Recall} & \textbf{NDCG} & \textbf{Recall} & \textbf{NDCG} & \textbf{Recall} & \textbf{NDCG} \bigstrut\\
    \Xhline{1.0pt}
    \multirow{3}[2]{*}{$\sigma(d_{uij})^{1/\tau}$} & PSL-tanh & \cellcolor{lightblue}0.1567  & \cellcolor{lightblue}0.1225  & \cellcolor{lightblue}0.0832  & \cellcolor{lightblue}0.0535  & \cellcolor{lightblue}0.1297  & \cellcolor{lightblue}0.0941  & \cellcolor{lightblue}0.2088  & \cellcolor{lightblue}0.1646  \bigstrut[t]\\
    & PSL-atan & \cellcolor{lightblue}0.1567  & \cellcolor{lightblue}0.1226  & \cellcolor{lightblue}0.0832  & \cellcolor{lightblue}0.0535  & \cellcolor{lightblue}0.1296  & \cellcolor{lightblue}0.0941  & \cellcolor{lightblue}0.2087  & \cellcolor{lightblue}0.1646  \\
    & PSL-relu & \cellcolor{lightblue}0.1569  & \cellcolor{lightblue}0.1227  & \cellcolor{lightblue}0.0838  & \cellcolor{lightblue}0.0541  & \cellcolor{lightblue}0.1299  & \cellcolor{lightblue}0.0945  & \cellcolor{lightblue}0.2089  & \cellcolor{lightblue}0.1647  \bigstrut[b]\\
    \hline
    \multirow{3}[2]{*}{$\sigma(d_{uij} / \tau)$} & PSL-tanh & 0.1415  & 0.1041  & 0.0767  & 0.0494  & 0.0876  & 0.0590  & 0.1956  & 0.1507  \bigstrut[t]\\
    & PSL-atan & 0.0307  & 0.0213  & 0.0453  & 0.0268  & 0.0363  & 0.0247  & 0.0982  & 0.0727  \\
    & PSL-relu & 0.1366  & 0.1053  & 0.0723  & 0.0452  & 0.1210  & 0.0855  & 0.1732  & 0.1304  \bigstrut[b]\\
    \Xhline{1.2pt}
    \end{tabular}
    \label{tab:appendix-temperature-designs-comparison}
\end{table}


\end{document}